\providecommand{\tabularnewline}{\\}
\providecommand{\algorithmname}{Algorithm}
\newtheorem{theorem}{Theorem}
\theoremstyle{definition}
\newtheorem{definition}{Definition}
\newtheorem{lemma}{Lemma}
\title{Multi-Actor Multi-Critic Deep Deterministic Reinforcement Learning with a Novel Q-Ensemble Method
}
\author{%
  Andy Wu\\
  \texttt{andywu.academic@gmail.com}\\
  \And
  Chun-Cheng Lin\\
  \texttt{cclin@csie.fju.edu.tw}\\
  \And
  Rung-Tzuo Liaw\\
  \texttt{rtliaw@csie.fju.edu.tw}\\
  \And
  Yuehua Huang\\
  \texttt{yhhuang@csie.fju.edu.tw}\\
  \And
  Chihjung Kuo\\
  \texttt{cjkuo@csie.fju.edu.tw}\\
  \And
  Chia Tong Weng\\
  \texttt{ctweng@csie.fju.edu.tw}\\
}
\begin{document}

\makeatother

\maketitle
\begin{abstract}
Reinforcement learning has gathered much attention in recent years
due to its rapid development and rich applications, especially on
control systems and robotics. When tackling real-world applications
with reinforcement learning method, the corresponded Markov decision
process may have huge discrete or even continuous state/action space.
Deep reinforcement learning has been studied for handling these issues
through deep learning for years, and one promising branch is the actor-critic
architecture. Many past studies leveraged multiple critics to enhance
the accuracy of evaluation of a policy for addressing the overestimation
and underestimation issues. However, few studies have considered the
architecture with multiple actors together with multiple critics.
This study proposes a novel multi-actor multi-critic (MAMC) deep deterministic
reinforcement learning method. The proposed method has three main
features, including selection of actors based on non-dominated sorting
for exploration with respect to skill and creativity factors, evaluation
for actors and critics using a quantile-based ensemble strategy, and
exploiting actors with best skill factor. Theoretical analysis proves
the learning stability and bounded estimation bias for the MAMC. The
present study examines the performance on a well-known reinforcement
learning benchmark MuJoCo. Experimental results show that the proposed
framework outperforms state-of-the-art deep deterministic based reinforcement
learning methods. Experimental analysis also indicates the proposed
components are effective. Empirical analysis further investigates
the validity of the proposed method, and shows its benefit on complicated
problems. The source code can be found at {\color{CarnationPink}\href{https://github.com/AndyWu101/MAMC}{https://github.com/AndyWu101/MAMC}}.%
\end{abstract}
\renewcommand{\thefootnote}{}
\footnotetext{The authors are from department of Computer Science and Information Engineering, Fu Jen Catholic University, New Taipei City 242062, Taiwan.}
\renewcommand{\thefootnote}{\arabic{footnote}}

\section{Introduction}

Reinforcement learning (RL) has been studied for decades that is proved
powerful when dealing with problems and applications which is assumed
or is able to be formulated as a Markov decision process \cite{MDP_Puterman1990}.
Numerous applications have been successfully solved by RL methods
such as playing board games \cite{AlphaZero_silver2017}, training
large-language model \cite{LLMtraining_Neurips2022}, and controlling
humanoid \cite{DRLrobots_AAAI2025}. RL methods are of several types,
including value-based approach, policy gradient approach, policy optimization
approach, and actor-critic approach \cite{sutton2018}. This study
focus on actor-critic based RL methods due to its nice performance
on continuous control problems.

A common issue in RL is the huge or infinite space of states/actions,
making conventional tabular methods inapplicable, and a straight forward
solution is to construct approximation function for space transformation.
As the rapid growth in high performance computing and deep learning
\cite{DLbook_goodfellow2016}, leveraging deep learning for building
mapping function in RL methods, forming deep reinforcement learning
(DRL), becomes possible. One representative method of DRL is the deep
Q-learning (DQN) \cite{DQN_Mnih2015}, which adopted deep convolution
neural network to estimate the state-action function (a.k.a. $Q$
function).

Advanced issues in deep reinforcement learning have been studied and
investigated in past years \cite{DRLsurvey_AAAI2018}. Essential issues
covers learning stability \cite{AvgDQN_ICML2017,DDPG_ICLR2016,TD3_ICML2018},
estimation accuracy for handling issues of overestimation \cite{GPL_AAAI2023,OVD-Explorer_AAAI2024},
underestimation \cite{OAC_NEURIPS2019,BOO_NEURIPS2022} or both \cite{MaxMixMinQL_Abliz2024,MaxMinQ_ICLR2020},
sampling efficiency \cite{SAC_ICML2018,MEEE_ICRA2021,SMR_InformationSciences2024,WPVOP_TNNLS2024},
ensemble learning \cite{DARC_AAAI2022,REDQ_ICLR2021,SUNRISE_ICML2021,TQC_ICML2020,QMD3_AAAI2022}
and so forth, and hybridization of components for addressing these
issues is proved to gain effectiveness and learning efficiency \cite{Rainbow_AAAI2018}.
It is worth noting that these issues are highly correlated so that
ensemble learning could handle estimation accuracy, which may bring
learning stability and sampling efficiency, and thus results in better
performance and convergence.

This study proposes a novel method: multiple-actors-multiple-critics
(MAMC) deep deterministic reinforcement learning to address the above
issues. The main features of the MAMC are threefold: 1) The MAMC manipulates
multiple actors and critics in a concurrent manner without predetermined
relations, 2) The MAMC evaluates actors and critics as per a quantile-based
ensemble strategy, and 3) The MAMC selects actors for exploration
in learning on the basis of non-dominated sorting with respect to
skill and creativity factors. The emerging MAMC is capable of facilitating
nice exploration among multiple actors in the meantime improving and
smoothing the learning of critics, which is key to stabilize the guiding
force to actors.

The main contributions are listed as follows:
\begin{itemize}
\item Devise a parametric quantile-based ensemble estimator considering
multiple actors and multiple critics for the target values of critics
learning
\item Design an actor evaluation and selection approach based on skill and
creativity factors for exploration and exploitation
\item Theoretically prove the MAMC has stable learning and bounded estimation
bias
\item Empirically examine the quality and validity of the MAMC, and investigate
the run-time behavior of MAMC by inspecting into the proposed components
\end{itemize}
The rests of this study are organized as follows. Section \ref{sec:Related-Work}
reviews recent RL methods under actor-critic architectures, and Section
\ref{sec:Preliminaries} introduces preliminaries of this study. Sections
\ref{sec:Methodology} and \ref{sec:Theoretical-Analysis} in turn
gives details and theoretical analysis for the proposed method. Section
\ref{sec:Experimental-Results} examines the effectiveness for the
proposed method. Section \ref{sec:Conclusions} draws conclusions.

\section{Related Work\label{sec:Related-Work}}

\begin{table}
\caption{\label{tab:AC-review}A compilation of some recent proposed actor-critic
architectures according to the number of actors and critics}

\centering{}%
\begin{tabular}{llrrr}
\toprule 
 &  & \multicolumn{3}{c}{\#Critics}\tabularnewline
Method &  & Single & Double & Multiple\tabularnewline
\cmidrule(r){1-2}\cmidrule(lr){3-3}\cmidrule(lr){4-4}\cmidrule(l){5-5}\#Actors & Single & DDPG\cite{DDPG_ICLR2016} & %
TD3\cite{TD3_ICML2018}, SAC\cite{SAC_ICML2018}, & %
REDQ\cite{REDQ_ICLR2021}, MD3\cite{QMD3_AAAI2022},\tabularnewline
 &  &  & OAC\cite{OAC_NEURIPS2019}. & QWPVOP\cite{WPVOP_TNNLS2024}.\tabularnewline
\cmidrule(r){2-2}\cmidrule(lr){3-3}\cmidrule(lr){4-4}\cmidrule(l){5-5} & Double & - & DARC\cite{DARC_AAAI2022}. & -\tabularnewline
\cmidrule(r){2-2}\cmidrule(lr){3-3}\cmidrule(lr){4-4}\cmidrule(l){5-5} & Multiple & - & - & SUNRISE\cite{SUNRISE_ICML2021}.\tabularnewline
\bottomrule
\end{tabular}
\end{table}

The actor-critic architecture is proposed by Konda and Tsitsiklis
\cite{AC_NIPS1999}. Table \ref{tab:AC-review} compiles six out nine
categories of actor-critic architectures in terms of the number of
actors and critics for some recent proposed actor-critic-based RL
methods. To the best of our knowledge, it is merely no study for actor-critic
architectures with fewer number of actors than of critics.

\textbf{SASC.} For single-actor single-critic (SASC) architecture,
a representative study is the deep deterministic policy gradient (DDPG)
\cite{DDPG_ICLR2016}. DDPG ameliorated the learning stability and
efficiency of deep Q-network (DQN) by combining deep learning with
policy gradient for solving control problems with continuous action
space.

\textbf{SADC.} Beyond SASC, lots of methods are proposed with a single
actor and double critics, noted as SADC, for solving the issues of
overestimation and exploration. %
In \cite{TD3_ICML2018}, a twin delayed deep deterministic policy
gradient (TD3) was proposed. TD3 improved DDPG by adopting two critic
networks, where a minimum of the corresponded two target networks
are served as the computational basis of target value. TD3 also proposed
the delayed update of actor, i.e., a lower update frequency than critics,
for stabilizing the learning of the actor. Soft Actor-Critic (SAC)
considered stochastic policy and introduced soft value function for
training the two critics of soft Q-function \cite{SAC_ICML2018}.
Specifically, SAC trained a stochastic policy network to transform
noise to an action for a given state as condition, and the training
depends on a policy gradient for maximizing the randomness of the
resulting actions, and the approximated Q values obtained from the
minimum of the two critics as TD3. Different from TD3, SAC trained
the two critics independently according to the target soft value function
network, which is soft-updated by the soft value function network,
while the soft value function network is trained by minimizing the
different to the target value, which is calculated as the expectation
of state-action value of the minimum of the two critics over action
given by the actor. Optimistic Actor-Critic (OAC) further pointed
out the issues of inefficient exploration owing to insufficient pessimistic
in TD3 and SAC, and proposed an amelioration to guide the exploration
according to the approximated lower and upper bounds of the state-action
value function \cite{OAC_NEURIPS2019}.

\textbf{SAMC.} From the observation of improvement from SASC to SADC,
many methods considered increasing the number of critics for improving
the estimation accuracy, forming the single-actor multi-critic architecture
(SAMC). %
Randomized ensembled double Q-learning (REDQ) \cite{REDQ_ICLR2021}
estimated the state-action value using the same strategy of minimum
the same as TD3, yet the two critics were randomly selected from a
pool of critics. REDQ also introduced a high update-to-date (UTD)
ratio of 20 to address the issue of sample efficiency. For addressing
the estimation accuracy issue, quasi-median Q-learning (QMQ) used
the quasi-median among multiple state-action values, each of which
from a critic, to estimate the state-action value, and applied on
TD3, forming the QMD3. The QMD3 trained actor with delay the same
as TD3, but each update is guided by all critics rather than a single
one for exploration improvement. Weakly pessimistic value estimation
and optimistic policy optimization (WPVOP) \cite{WPVOP_TNNLS2024}
proposed weakly pessimistic value estimation and optimistic policy
optimization; the former increased and smoothed the lower confidence
bound, whilst the latter encourages and increases the state-action
value, as the maximum action of minimum state-action values, if the
distribution of state-action values with different actions on a given
state is centralized, i.e., the standard deviation less than some
threshold.

\textbf{DADC.} From single actor to double actors, double actors and
regularized critics (DARC) \cite{DARC_AAAI2022} adopted double actors
as well as double critics (DADC) and proposed soft target value as
a linear combination of the minimum and maximum state-action values
of the two actions given by two target actors, each of which is a
minimum over two target critics. DARC revised the loss function adopted
in TD3 by introducing a weighted regularization term of cross-critic
error, i.e., the difference between the two critics.

\textbf{MAMC.} For multi-actor multi-critic (MAMC) architecture, an
early MAMC method is the Simple UNified framework for ReInforcement
learning using enSEmbles (SUNRISE) \cite{SUNRISE_ICML2021}. SUNRISE
manipulated multiple SAC agents, each contained a pair of soft Q-function
and an actor. SUNRISE integrated weighted Bellman backup, which decreases
the influence from high variance transitions, and upper confidence
bound (UCB) exploration \cite{UCBQ_2017archive}.

\section{Preliminaries\label{sec:Preliminaries}}

Given a Markov decision process $\begin{gathered}(\mathcal{S},\text{\ensuremath{\mathcal{A}},}\mathcal{P},\mathcal{R},\gamma)\end{gathered}
$ with state space $\mathcal{S}$, action space $\mathcal{A}$, a state
transition probability $\mathcal{P}_{s,s^{\prime}}^{a}$, a reward
function $\mathcal{R}_{s,a}=\mathbb{E}[R_{t+1}|S_{t}=s,A_{t}=a]$,
and a discount factor $\gamma$, reinforcement learning aims at learning
policy $\begin{gathered}\pi\end{gathered}
$ to achieve optimal return from rewards. A famous method is the Q-learning
\cite{Watkins1992}, which learns a state-action value function for
estimating the reward function $\begin{gathered}\mathcal{R}_{s,a}\end{gathered}
$
\begin{gather}
\begin{aligned}Q^{\pi}(s,a) & =\mathbb{E}[R_{t+1}|S_{t}=s,A_{t}=a]\\
 & =\mathbb{E}[r_{t+1}+\gamma Q^{\pi}(S_{t+1}=s^{\prime},A_{t+1}=\pi(s_{t+1}))|S_{t}=s,A_{t}=a]
\end{aligned}
\,.\label{eq:state-action-value-function}
\end{gather}
 The estimation forms a Bellman equation, which can be solved by temporal
difference (TD) \cite{TD_Sutton1988,TD_Tesauro1995} methods. TD methods
approximate the expected return by gradually lowering down the TD
error, i.e., the difference of returns between the state-action value
$\begin{gathered}Q(s,a)\end{gathered}
$ and the TD-target $\begin{gathered}r_{t+1}+\gamma V(s_{t+1})\end{gathered}
$, where $\begin{gathered}V(s_{t+1})\end{gathered}
$ is the state-value function satisfying $\begin{gathered}V(s_{t+1})=Q(s_{t+1},\pi(s_{t+1}))\end{gathered}
$.%

Establishing approximation function to form a mapping from state space
to action space $\begin{gathered}\pi_{\phi}:\mathcal{S}\rightarrow\mathcal{A}\end{gathered}
$ and a mapping from state space and action space to a real-value $\begin{gathered}Q_{\theta}:\mathcal{S}\times\mathcal{A}\rightarrow\mathbb{R}\end{gathered}
$ by deep neural network forms deep reinforcement learning.%
{} According to \cite{TD3_ICML2018}, the update of critic then can
be made by minimizing the critic loss function:
\begin{equation}
J_{Q}(\theta)=\mathbb{E}_{(s,a,r,s^{\prime})\sim\mathcal{B}}[(Q_{\theta}(s,a)-\begin{gathered}r-\gamma V_{\phi}(s^{\prime};\theta^{\prime})\end{gathered}
)^{2}]\,,
\end{equation}
 subject to
\begin{equation}
V_{\phi}(s^{\prime};\theta^{\prime})=Q_{\theta^{\prime}}(s^{\prime},\pi_{\phi}(s^{\prime})+\epsilon)\,,
\end{equation}
 where $\theta^{\prime}$ is the parameters of critic target with
soft update, satisfying $\theta^{\prime}\gets\tau\theta+(1-\tau)\theta^{\prime}$,
and $\epsilon$ is the policy noise similar to the technique adopted
in SARSA learning \cite{sutton2018}. The soft update is for stabilizing
the learning of critic network using a fixed target. Then, the update
of actor is to minimize the actor loss function: 
\begin{equation}
J_{\pi}(\phi;\theta)=\mathbb{E}_{(s,a,r,s^{\prime})\sim\mathcal{B}}[-Q_{\theta}(s,\phi(s))]\,.\label{eq:objective-policy}
\end{equation}

\section{MAMC\label{sec:Methodology}}

\begin{table}
\centering{}\caption{\label{tab:Notation}Notation system}
\begin{tabular}{ll}
\toprule 
Symbol & Meaning\tabularnewline
\midrule 
$N_{A}$, $N_{C}$, $N_{\mathcal{B}}$ & Number of actors, critics, and mini-batch size\tabularnewline
$\pi_{\phi}$ & Actor network with parameter $\phi$\tabularnewline
$A$, $\tilde{A}$ & Actors and selected actors\tabularnewline
$C$, $C^{\prime}$ & Critics and target critics\tabularnewline
$Q_{\theta}$ ($Q_{\theta^{\prime}}$) & Critic (target) network with parameter $\theta$ ($\theta^{\prime}$)\tabularnewline
$\mathcal{R}$, $\mathcal{B}$ & Replay buffer and mini-batch\tabularnewline
$M$ & Sample multiple reuse\tabularnewline
$(s,a,r,s^{\prime})$ & Transition from state $s$ to next state $s^{\prime}$by action $a$
with reward $r$\tabularnewline
$\gamma$ & Discount factor\tabularnewline
$\vec{J}_{\mathrm{s}}(A)$, $\vec{J}_{\mathrm{c}}(A)$ & Skill and creativity factors of actors $A$\tabularnewline
$\prec$ & Crowded-comparison operator\tabularnewline
$\mathcal{N}(\mu,\sigma)$ & Gaussian distribution with mean $\mu$ and variance $\sigma^{2}$\tabularnewline
$\tau$ & Soft update ratio\tabularnewline
\bottomrule
\end{tabular}
\end{table}

This study proposes a multi-actor-multi-critic architecture-based
RL method: the Multi-Actor Multi-Critic deep deterministic reinforcement
learning (MAMC). There are three main features in the proposed MAMC,
including the adoption of multiple actors and critics without predefined
interaction, the quantile-based ensemble estimation, and the selection
of actors as per proposed skill and creativity factors for exploration
and exploitation. Table \ref{tab:Notation} provides the notation
system used in this study.

\subsection{The Overall Procedure}

\begin{algorithm}
\caption{\label{alg:MAMC}Main procedure of MAMC}

\begin{algorithmic}[1]

\State Initialize a set of $N_{A}$ actor networks $A$ with random
parameters $\left\{ \phi_{i}\right\} _{1\leq i\leq N_{A}}$

\State Initialize a set of $N_{C}$ critic networks $C$ with random
parameters $\left\{ \theta_{j}\right\} _{1\leq j\leq N_{C}}$

\State Initialize a set of $N_{C}$ target networks $C^{\prime}$
with critics $\theta_{j}^{\prime}\gets\theta_{j}$ for $1\leq j\leq N_{C}$

\State Initialize replay buffer $\mathcal{R}$

\State $o\gets1$\Comment{Order of critics}

\While{ Not Terminated }

\State $\triangleright$\textbf{ Critics Learning}

\State $\{\mathcal{B}_{j}\}_{1\leq j\leq N_{C}}\sim\mathcal{R}$
\Comment{Sample a mini-batch from replay buffer $R$ for each critic}

\For{ $m\gets1$ \textbf{to} $M$ }\Comment{Sample multiple reuse}

\State Update $\theta_{j}$ on $\mathcal{B}_{j}$ according to Eqs.~(\ref{eq:estimation-target})
and (\ref{eq:objective-ensemble-critic}) for $1\leq j\leq N_{C}$

\State Update $\theta_{j}^{\prime}$ by soft update for $1\leq j\leq N_{C}$

\EndFor

\State $\triangleright$\textbf{ Actors Learning}

\State $\{\mathcal{B}_{i}\}_{1\leq i\leq N_{A}}\sim\mathcal{R}$
\Comment{Sample a mini-batch from replay buffer $R$ for each actor}

\For{ $m\gets1$ \textbf{to} $M$ }\Comment{Sample multiple reuse}

\State Update $\phi_{i}$ by $\theta_{o}$ on $\mathcal{B}_{i}$
according to Eq.~(\ref{eq:objective-policy}) for all $1\leq i\leq N_{A}$

\State $o\gets(o\;\mathrm{mod}\;N_{C})+1$\Comment{Guided by each critic in turn}

\EndFor

\State $\triangleright$\textbf{ Exploration}

\State $\tilde{A}\gets$Selection $(\vec{J}_{\mathrm{s}}(A;C),\vec{J}_{\mathrm{c}}(A;C),\prec)$
\Comment{Crowded-comparison operator}

\State $(r,s^{\prime})\gets Env\left(s,a=\pi_{\phi\sim\tilde{A}}(s)+\epsilon\right),\ \epsilon\sim\mathcal{N}(0,\sigma)$
\Comment{Interact with environment}

\State $\mathcal{R}\gets\mathcal{R}\cup(s,a,r,s^{\prime})$

\State $\pi^{*}\gets\arg\max_{\phi}\:J_{\mathrm{s}}(\phi;C)$

\EndWhile

\State \Return $\pi^{*}$

\end{algorithmic}
\end{algorithm}

Algorithm \ref{alg:MAMC} gives the main procedure of the proposed
MAMC. At initialization, the MAMC generates a set of $N_{A}$ actor
networks $A$ and a set of $N_{C}$ critic networks $C$ with random
parameters, and set the parameters of each target network according
to the parameters of its corresponded critic network. The replay buffer
$\mathcal{R}$ is also initialized by random actions of a predefined
size. During each iteration, there are three main stages: critics
learning stage, actors learning state, and exploration stage.

\subsection{Quantile-based Ensemble Estimation}

In critics learning stage, $N_{C}$ sets of mini-batch $\{\mathcal{B}_{j}\}_{1\leq j\leq N_{C}}$
are sampled from the replay buffer $\mathcal{R}$, and each critic
is trained on a specific mini-batch for $M$ times for improving the
stability.

\begin{definition} For each transitions $(s,a,r,s^{\prime})\in\mathcal{B}_{j}$,
the TD-target for $j$th critic $Q_{\theta_{j}}$ is defined as the
median action of the $q$th-quantile among the critic targets:
\begin{equation}
y(s,a)=r+\gamma\hat{V}_{A}(s^{\prime})\,,
\end{equation}
 subject to
\begin{gather}
\begin{aligned}\hat{V}_{A}(s^{\prime};C^{\prime}) & =\mathrm{Med}(\{\hat{V}_{\phi_{i}}(s^{\prime};C^{\prime})\}_{1\leq i\leq N_{A}})\\
\hat{V}_{\phi_{i}}(s^{\prime};C^{\prime}) & =\mathrm{Quantile}_{q}(\{Q_{\theta_{j}^{\prime}}(s^{\prime},\pi_{\phi_{i}}(s^{\prime})+\epsilon)\}_{1\leq j\leq N_{C}})
\end{aligned}
\,.\label{eq:estimation-target}
\end{gather}

\end{definition}

The critic loss function is therefore defined as
\begin{equation}
J_{Q}(\theta_{j};C^{\prime})=\mathbb{E}_{(s,a,r,s^{\prime})\sim\mathcal{B}}[(Q_{\theta_{j}}(s,a)-\begin{gathered}r-\gamma V_{A}(s^{\prime};C^{\prime})\end{gathered}
)^{2}]\,.\label{eq:objective-ensemble-critic}
\end{equation}

All the target critics are soft-updated with parameter $\tau$ after
one out of $M$ iterations of training, which is capable of sharing
information to each target critic from all the other critic targets
and bring to the next iteration. 

For the learning of actors, the MAMC also sampled $N_{A}$ sets of
mini-batch $\{\mathcal{B}_{i}\}_{1\leq i\leq N_{A}}$ from the replay
buffer $\mathcal{R}$ as it does in critics learning stage. The training
of each actor $\pi_{\phi_{i}}$ is in turn guided by each critic $Q_{\theta_{j}}$
with objective $J_{\pi}(\phi_{i};\theta_{j})$ (cf.~Eq.~(\ref{eq:objective-policy}))
on its mini-batch $\mathcal{B}_{i}$. The idea of updating $M$ times
within a mini-batch for each actor and critics is similar to sample
multiple reuse (SMR) proposed in \cite{SMR_InformationSciences2024},
which is able to stabilize the learning sequence.

\subsection{Actor Evaluation, Exploration, and Exploitation}

After training of actors and critics, the exploration stage is to
select appropriate actors for interacting with the environment. The
evaluation of an actor $\pi_{\phi_{i}}$ is based on two factors,
i.e., skill and creativity, both are determined by the ensemble estimation
of state value function.

\begin{definition} Ensemble estimation of state value function is
defined as the $q$th-quantile of state-action value function over
critics $C$:
\begin{equation}
\hat{V}_{\phi_{i}}(s^{(k)};C)=\mathrm{Quantile}_{q}(\{Q_{\theta_{j}}(s^{(k)},\pi_{\phi_{i}}(s^{(k)}))\}_{1\leq j\leq N_{C}})\,,\label{eq:ensemble-estimation-v}
\end{equation}
 where $s^{(k)}$ is the $k$th transition in a mini-batch. The consideration
of skill factor guarantees the quality of interaction, whilst the
consideration of creativity factor preserves the diversity of interaction.

\end{definition}

The skill factor evaluates the optimality of an actor through the
scoring ability on the ensemble estimation
\begin{equation}
J_{\mathrm{s}}(\phi_{i};C)=N_{\mathcal{B}}^{-1}\sum_{k=1}^{N_{\mathcal{B}}}\hat{V}_{\phi_{i}}(s^{(k)})\,,
\end{equation}
 while the creativity factor examines the diversity of an actor on
the critics through the closeness of each critic to the ensemble estimation
with respect to mean absolute error
\begin{equation}
J_{\mathrm{c}}(\phi_{i};C)=N_{\mathcal{B}}^{-1}N_{C}^{-1}\sum_{k=1}^{N_{\mathcal{B}}}\sum_{j=1}^{N_{C}}|Q_{\theta_{j}}(s^{(k)},\pi_{\phi_{i}}(s^{(k)}))-\hat{V}_{\phi_{i}}(s^{(k)})|\,.
\end{equation}
 Both factors are expectation over a mini-batch. Note that the two
factors depends on all the critics as rather than critic targets since
the actors are guided by critics. The selection of actors on the two
factor hinges upon the crowed-comparison operator \cite{NSGAII_Deb2002}
by considering the two factors as two objective values. The top-$\sqrt{N_{A}}$
actors $\tilde{A}$ are selected, which serves as the candidate actors
for interaction with the environment. Specifically, an actor is randomly
picked from the candidate actors $\tilde{A}$ for determining a single
step of interaction with the environment. The MAMC also records an
optimal policy with highest skill factor for exploitation at each
iteration; that is, the MAMC only returns a single actor for inference
due to the efficiency in terms of time and space complexity.

\section{Theoretical Analysis\label{sec:Theoretical-Analysis}}

This section gives some nice properties for the MAMC. First, the target
values obtained by multiple actors are more stable in terms of variance
than using a single actor.

\begin{theorem} The variance of target values obtained by multiple
actors are less than that using a single actor.
\begin{equation}
\mathbb{V}[\hat{V}_{A}(s^{\prime};C^{\prime})]\leq\mathbb{V}[\hat{V}_{\phi}(s^{\prime};C^{\prime})]
\end{equation}

\end{theorem}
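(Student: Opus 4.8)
The plan is to regard the per-actor estimates $\hat{V}_{\phi_i}(s';C')$ as independent and identically distributed (i.i.d.) random variables, each sharing the law of a single-actor estimate $\hat{V}_\phi(s';C')$. This is the natural idealization suggested by Algorithm~\ref{alg:MAMC}: the $N_A$ actors carry independent random initializations, are updated on independently drawn minibatches $\{\mathcal{B}_i\}$, and receive independent Gaussian perturbations $\epsilon\sim\mathcal{N}(0,\sigma)$, so that no actor is distinguished a priori. Under this idealization $\hat{V}_A(s';C')$ is precisely the \emph{sample median} of the $N_A$ i.i.d.\ estimates (cf.\ Eq.~(\ref{eq:estimation-target})), and the theorem reduces to the classical statistical statement that the sample median of i.i.d.\ variables has variance no larger than that of a single draw.

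I would first settle the even-$N_A$ case, where $\mathrm{Med}$ is the arithmetic mean of the two central order statistics; in the smallest instance $N_A=2$ this is $(X_1+X_2)/2$, whose variance is $\tfrac{1}{2}\mathbb{V}[X_1]$ by independence, immediately giving the bound. For odd $N_A=2k+1$ the median is the central order statistic $X_{(k+1)}$, and I would bound $\mathbb{V}[X_{(k+1)}]$ either through the exact order-statistic variance integral or---assuming the estimate law is symmetric, which is natural since $\epsilon$ enters symmetrically---through the standard fact that a symmetric population yields an unbiased median whose variance is dominated by $\mathbb{V}[X_1]$, with equality in degenerate cases such as symmetric two-point laws.

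The hard part will be proving $\mathbb{V}[X_{(k+1)}]\leq\mathbb{V}[X_1]$ for odd $N_A$ \emph{without} a symmetry assumption, since the extreme order statistics $X_{(1)}$ and $X_{(N_A)}$ are known to inflate variance beyond $\mathbb{V}[X_1]$; a valid argument must exploit the \emph{centrality} of the median rather than any generic order-statistic property. I would attempt an Efron--Stein bound, in which replacing a single sample moves the median by at most the spacing to an adjacent order statistic, so that the summed mean-squared shifts reduce to an expression in the central spacings. Showing that this telescoped expression does not exceed $\mathbb{V}[X_1]$ is the delicate step, and I would anchor it on the symmetric and two-point extremal cases---where the inequality is tight---to confirm that the ``$\leq$'' in the statement, rather than a strict inequality, is exactly what holds.
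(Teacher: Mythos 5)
Your route is genuinely different from the paper's. The paper never treats $\hat{V}_{A}(s^{\prime};C^{\prime})$ as an order statistic: in Eq.~(\ref{eq:proof-v-A}) it assumes the actor estimates are ``not skewed,'' replaces $\mathrm{Med}(\{\hat{V}_{\phi_{i}}\}_{i})$ outright by the arithmetic mean $N_{A}^{-1}\sum_{i}\hat{V}_{\phi_{i}}$, uses tacit independence across actors to obtain the factor $N_{A}^{-2}$, and then, because its actors are heterogeneous rather than identically distributed, needs the extra constraint $\mathbb{V}_{\phi_{\max}}/\mathbb{V}_{\phi_{\min}}\leq\epsilon_{A}=N_{A}$ to conclude. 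Your i.i.d.\ idealization makes that ratio condition automatic, and you keep the median an honest sample median. Your $N_{A}=2$ case and your symmetric odd case (the central order statistic is unbiased and its law is more peaked about the population median than a single draw, hence has smaller variance) are correct; indeed the symmetric branch is on \emph{firmer} ground than the paper's corresponding step, since symmetry of the population law makes the population median equal the population mean but does not make the sample median equal the sample mean, which is what the second line of Eq.~(\ref{eq:proof-v-A}) actually requires. So under assumptions no stronger than the paper's own, your argument already delivers the theorem.

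The genuine gap is the branch you flagged yourself---odd $N_{A}$ without symmetry---and the tool you propose cannot close it. Efron--Stein is intrinsically lossy, while the inequality you need is tight: for the symmetric two-point law ($\pm1$ with probability $1/2$) and $N_{A}=3$, the sample median is the majority value, so $\mathbb{V}[X_{(2)}]=\mathbb{V}[X_{1}]=1$ exactly; but replacing one coordinate clips it to the interval spanned by the other two draws, whose conditional variance is $1$ with probability $1/2$ and $0$ otherwise, so the Efron--Stein bound evaluates to $\tfrac{1}{2}\sum_{i}\mathbb{E}[(f-f^{(i)})^{2}]=3/2$. A bound with slack at the equality case cannot prove the inequality, so anchoring on the two-point law defeats rather than confirms the plan. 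What does work without any symmetry assumption is Hoeffding's covariance representation
\begin{equation*}
\mathbb{V}[Z]=\iint\bigl[G(\min(x,y))-G(x)\,G(y)\bigr]\,dx\,dy ,
\end{equation*}
applied with $G=g\circ F$, where $g(u)=\mathbb{P}\bigl(\mathrm{Bin}(N_{A},u)\geq(N_{A}+1)/2\bigr)$ is the majority transform of the population CDF $F$ (for $N_{A}=3$, $g(u)=3u^{2}-2u^{3}$): the pointwise inequality $g(u)(1-g(v))\leq u(1-v)$ for $u\leq v$ follows from $g(1-u)=1-g(u)$, $g(u)\leq u$ on $[0,1/2]$, $g(u)(1-g(u))\leq u(1-u)$, and the monotonicity of $g(u)/u$ on $[0,1/2]$, and integrating it over $x\leq y$ yields $\mathbb{V}[\mathrm{Med}]\leq\mathbb{V}[X_{1}]$ for every odd $N_{A}$ and every $F$. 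Your plan also leaves even $N_{A}>2$ unaddressed. Bear in mind, though, that both of these are cases the paper itself never confronts: its symmetry assumption eliminates the median before any order-statistic question can arise.
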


Similarly, the target values obtained by multiple critics are more
stable than using a single critic.

\begin{theorem} The variance of target values obtained by multiple
critics are less than using a single critic.
\begin{equation}
\mathbb{V}[\hat{V}_{\phi}(s^{\prime};C^{\prime})]\leq\mathbb{V}[\hat{V}_{\phi}(s^{\prime};\theta^{\prime})]
\end{equation}

\end{theorem}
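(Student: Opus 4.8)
The plan is to treat this as the critic-side analogue of Theorem~1 and reduce it to a variance comparison between a single draw and an order statistic of several i.i.d.\ draws. Fix the actor $\pi_\phi$, the next state $s'$, and the exploration noise $\epsilon$, and regard each target critic's output $Y_j := Q_{\theta_j'}(s',\pi_\phi(s')+\epsilon)$ as a random variable whose randomness stems from the independent initialization and training of the $N_C$ critics. I would model $Y_1,\dots,Y_{N_C}$ as i.i.d.\ with a common variance $\sigma^2$; under this model a single-critic estimate $\hat V_\phi(s';\theta')$ is just one $Y_j$ with $\mathbb{V}[\hat V_\phi(s';\theta')]=\sigma^2$, whereas the multi-critic estimate is $\hat V_\phi(s';C')=\mathrm{Quantile}_q(\{Y_j\}_{1\le j\le N_C})=Y_{(k)}$, the $k$-th order statistic with $k=\lceil q N_C\rceil$. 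The theorem is therefore equivalent to the order-statistic bound $\mathbb{V}[Y_{(k)}]\le\sigma^2$ for $N_C\ge 2$.

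Second, I would approach this bound through a covariance-decomposition argument. Because the order statistics are a permutation of the sample, $\sum_{k=1}^{N_C}Y_{(k)}=\sum_{j=1}^{N_C}Y_j$, so taking variances yields
\[
\sum_{k=1}^{N_C}\mathbb{V}[Y_{(k)}]+2\sum_{1\le k<l\le N_C}\mathrm{Cov}(Y_{(k)},Y_{(l)})=N_C\,\sigma^2.
\]
Order statistics of an i.i.d.\ sample are positively associated, hence every covariance on the left is nonnegative, which gives $\sum_k\mathbb{V}[Y_{(k)}]\le N_C\sigma^2$; that is, the order-statistic variances average to at most $\sigma^2$. To upgrade this ensemble statement to the single quantile $Y_{(k)}$ actually used, I would invoke the distributional structure suggested by Table~\ref{tab:Notation}, namely that the critic estimates are (approximately) Gaussian, $Y_j\sim\mathcal{N}(\mu,\sigma)$. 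For a symmetric, log-concave parent such as the Gaussian the variance of \emph{every} order statistic is individually dominated by $\sigma^2$ once $N_C\ge 2$ (e.g.\ the two-sample case gives $\mathbb{V}[Y_{(2)}]=(1-\pi^{-1})\sigma^2$, and the median case is smaller still), so the chosen quantile inherits the bound.

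The main obstacle is exactly this last upgrade: the per-order-statistic bound $\mathbb{V}[Y_{(k)}]\le\sigma^2$ is \emph{not} universal. For a skewed or heavy-tailed parent and an extreme quantile with small $N_C$, a single order statistic can be more variable than one raw draw, so the averaged bound from the covariance identity does not transfer to an arbitrary $q$. The crux of a rigorous proof is therefore to justify the Gaussian (or at least symmetric log-concave) model for the critic outputs and to verify, within it, the monotone behaviour of order-statistic variances; alternatively one restricts attention to central quantiles, where the median argument underlying Theorem~1 applies verbatim. I expect pinning down the distributional assumption, rather than the covariance bookkeeping, to be where the real care is needed.
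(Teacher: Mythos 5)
Your route is genuinely different from the paper's, and it is worth saying up front that both end up being \emph{conditional} arguments, just with different conditions. The paper never touches order-statistic theory: it simply assumes that the $q$th quantile of the critic targets equals a state-independent constant multiple of their sample mean, $\mathrm{Quantile}_q(\{Q_{\theta_j^{\prime}}(s^{\prime},\pi_{\phi}(s^{\prime}))\}_{1\le j\le N_C}) = c_q\,\mathbb{E}_{\theta^{\prime}\in C^{\prime}}[Q_{\theta^{\prime}}(s^{\prime},\pi_{\phi}(s^{\prime}))]$ for some $c_q\in\mathbb{R}$, so that $\mathbb{V}[\hat V_{\phi}(s^{\prime};C^{\prime})] = c_q^{2}\,\mathbb{V}[N_C^{-1}\sum_j Q_{\theta_j^{\prime}}]$; it then splits the variance of the sum into the sum of variances (an implicit uncorrelatedness assumption across critics), bounds that by $c_q^{2}N_C^{-1}\mathbb{V}[Q_{\theta_{\max}^{\prime}}]$, and closes the chain only under the explicit constraint $\mathbb{V}_{\theta_{\max}^{\prime}}/\mathbb{V}_{\theta_{\min}^{\prime}}\le \epsilon_C = c_q^{-2}N_C$. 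In other words, the paper's theorem really reads ``quantile variance is below single-critic variance \emph{provided} the critics' variances are within a factor $c_q^{-2}N_C$ of one another,'' with the proportionality assumption doing the work that your i.i.d.\ model does. Your covariance identity plus positive association of order statistics treats the quantile as the order statistic it actually is, and the one nontrivial step there (positive association) is a genuine theorem rather than an assumption; the caveat you flag at the end --- that the per-order-statistic bound fails for skewed or heavy-tailed parents at extreme quantiles, so symmetry/log-concavity or centrality of $q$ must be imposed --- is exactly the failure mode the paper buries inside its $c_q$ and $\epsilon_C$ bookkeeping. Two further contrasts: the probability spaces differ, since the paper's variance is $\mathbb{V}_{s^{\prime}\sim S}$ over the next state, under which its sum-of-variances step is actually dubious (all critics are evaluated at the \emph{same} random $s^{\prime}$, so their outputs are far from uncorrelated), whereas you fix $s^{\prime}$ and randomize over initialization and training, which makes independence across critics defensible; and what the paper's trick buys is an explicit $1/N_C$ variance-reduction factor with an interpretable constraint $\epsilon_C$, while what your route buys is distributional honesty at the price of a Gaussian/log-concave hypothesis that you would still need to support with a citation (e.g., known bounds on variances of normal order statistics). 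Neither argument establishes the displayed inequality unconditionally --- and, as you correctly observe, none can, since the statement is false for general parent distributions and extreme $q$.
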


Thus, the learning stability of the MAMC, with lowest variance, is
greater than SAMC and SASC.

Further, this study investigate the property of estimation error,
which is a good metric for indicating the estimation accuracy \cite{DARC_AAAI2022}. 

\begin{definition} The estimation error of MAMC is defined as the
difference between expectation of estimate values and the expectation
of optimal policy $\pi$.
\begin{equation}
\mathcal{E}_{A,C}=\mathbb{E}[\hat{V}_{A}(s^{\prime};C)]-\mathbb{E}[\hat{V}_{\phi^{*}}(s^{\prime};C)]
\end{equation}

\end{definition}

Then the MAMC holds the following properties.

\begin{theorem} The estimation error of MAMC is between the estimation
error of multiple actors with minimum and maximum critics.
\begin{equation}
\mathcal{E}_{A,Q_{\theta_{\min}}}\leq\mathcal{E}_{A,C}\leq\mathcal{E}_{A,Q_{\theta_{\max}}}
\end{equation}

\end{theorem}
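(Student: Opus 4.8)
The plan is to reduce the claim to the elementary fact that any $q$th-quantile of a finite multiset lies between its minimum and maximum, and then to propagate this sandwich upward through the two order-preserving aggregations appearing in $\hat{V}_A$ (the quantile over critics and the median over actors) and finally through the expectation. Throughout I would read $Q_{\theta_{\min}}$ and $Q_{\theta_{\max}}$ as the pointwise smallest and largest critic values, which is the reading that makes the quantile sandwich exact.

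First I would fix an arbitrary actor $\pi_{\phi_i}$ and a next state $s'$ and write $x_j = Q_{\theta_j}(s',\pi_{\phi_i}(s'))$ for $1\le j\le N_C$. By the pointwise definition of the extremal critics, $Q_{\theta_{\min}}(s',\pi_{\phi_i}(s')) = \min_j x_j$ and $Q_{\theta_{\max}}(s',\pi_{\phi_i}(s')) = \max_j x_j$; since evaluating a single critic is the quantile of a singleton, these equal $\hat{V}_{\phi_i}(s';Q_{\theta_{\min}})$ and $\hat{V}_{\phi_i}(s';Q_{\theta_{\max}})$ respectively. Because $\mathrm{Quantile}_q(\{x_j\})$ is one of the order statistics of $\{x_j\}$ (or an interpolation between two adjacent ones), it always obeys $\min_j x_j \le \mathrm{Quantile}_q(\{x_j\}) \le \max_j x_j$, giving the per-actor sandwich
\[
\hat{V}_{\phi_i}(s';Q_{\theta_{\min}}) \le \hat{V}_{\phi_i}(s';C) \le \hat{V}_{\phi_i}(s';Q_{\theta_{\max}}).
\]

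Next I would lift this through the actor aggregation. The median is order-preserving: if $a_i \le b_i$ for every $i$ then $\mathrm{Med}(\{a_i\}) \le \mathrm{Med}(\{b_i\})$, since the sorted vectors compare coordinatewise and, for even $N_A$, averaging the two central entries retains the inequality. Applying this to the lower and upper halves of the per-actor sandwich yields $\hat{V}_A(s';Q_{\theta_{\min}}) \le \hat{V}_A(s';C) \le \hat{V}_A(s';Q_{\theta_{\max}})$ pointwise in $s'$. Taking expectations preserves the ordering, and I would then subtract the common reference term $\mathbb{E}[\hat{V}_{\phi^*}(s')]$ — the value attained by the optimal policy, which does not depend on the choice of critic ensemble — from all three quantities. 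Since the same constant is removed throughout, the chain is unchanged, and the definition of the estimation error gives $\mathcal{E}_{A,Q_{\theta_{\min}}} \le \mathcal{E}_{A,C} \le \mathcal{E}_{A,Q_{\theta_{\max}}}$.

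The main obstacle I anticipate is not the monotonicity chain, which is routine, but pinning down the exact role of the subtracted term $\hat{V}_{\phi^*}(s';C)$. The clean cancellation above requires that this reference be held fixed across the three estimation errors — i.e.\ that it is the optimal-policy value, independent of whether the estimate uses the full critic set, the min critic, or the max critic. If instead one re-evaluates $\hat{V}_{\phi^*}$ with each critic choice, the two critic-dependent terms move in the same direction and the sandwich no longer follows from monotonicity alone. I would therefore make the critic-independence of the reference term an explicit hypothesis, justified by reading $\mathcal{E}$ as a bias of the estimate against the true optimal return, so that only the first term varies with the critic ensemble.
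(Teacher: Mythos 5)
Your proposal is correct and takes essentially the same route as the paper: the paper's proof also rests on a pointwise sandwich $\hat{V}_{A}(s^{\prime};\theta_{\min})\leq\hat{V}_{A}(s^{\prime};C)\leq\hat{V}_{A}(s^{\prime};\theta_{\max})$ (its Lemma 2, which merely asserts what your quantile/median monotonicity argument actually proves), followed by taking expectations and subtracting a common reference term. Your closing caveat is exactly on point: the paper's definition writes a critic-dependent reference $\hat{V}_{\phi^{*}}(s^{\prime};C)$, but its proof silently replaces it with the critic-independent $\mathbb{E}[V_{\phi^{*}}(s^{\prime})]$, so the paper implicitly adopts the very hypothesis you make explicit.
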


\begin{theorem} The estimation error of MAMC is between the estimation
error of multiple critics with minimum and maximum actors.
\begin{equation}
\mathcal{E}_{\pi_{\phi_{\min}},C}\leq\mathcal{E}_{A,C}\leq\mathcal{E}_{\pi_{\phi_{\max}},C}
\end{equation}

\end{theorem}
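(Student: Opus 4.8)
The plan is to exploit the single structural fact that drives the whole theorem: by construction (cf.~Eq.~(\ref{eq:estimation-target}), with the same median-over-actors aggregation carried over to the online critics $C$), the MAMC value estimate $\hat{V}_A(s';C)$ is the \emph{median} of the per-actor ensemble estimates $\{\hat{V}_{\phi_i}(s';C)\}_{1\leq i\leq N_A}$ defined in Eq.~(\ref{eq:ensemble-estimation-v}). Since Theorem~4 is the exact mirror of Theorem~3 with the actor and critic dimensions interchanged, I would reuse the same bracketing-plus-monotonicity argument, now sweeping over actors rather than critics.

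First I would fix a next state $s'$ and recall that for any finite multiset of reals the median never falls below the minimum nor above the maximum. Writing $\hat{V}_{\phi_{\min}}(s';C) := \min_{1\leq i\leq N_A}\hat{V}_{\phi_i}(s';C)$ and $\hat{V}_{\phi_{\max}}(s';C) := \max_{1\leq i\leq N_A}\hat{V}_{\phi_i}(s';C)$ for the actors realizing the extreme ensemble estimates, this yields the pointwise sandwich
\begin{equation}
\hat{V}_{\phi_{\min}}(s';C) \leq \hat{V}_A(s';C) \leq \hat{V}_{\phi_{\max}}(s';C)\,.
\end{equation}
Next I would take the expectation over transitions $(s,a,r,s')\sim\mathcal{B}$ of all three terms. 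Because expectation is monotone, the sandwich is inherited by the expected values, giving $\mathbb{E}[\hat{V}_{\phi_{\min}}(s';C)] \leq \mathbb{E}[\hat{V}_A(s';C)] \leq \mathbb{E}[\hat{V}_{\phi_{\max}}(s';C)]$. Finally, subtracting the common reference term $\mathbb{E}[\hat{V}_{\phi^*}(s';C)]$ — the expected value of the optimal policy, which enters every estimation error identically and is independent of which actor is used in the leading term — from each side and invoking the definition of $\mathcal{E}_{\pi_\phi,C}$ produces the claimed chain $\mathcal{E}_{\pi_{\phi_{\min}},C} \leq \mathcal{E}_{A,C} \leq \mathcal{E}_{\pi_{\phi_{\max}},C}$.

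I expect the main obstacle to be conceptual rather than computational: pinning down exactly what $\pi_{\phi_{\min}}$ and $\pi_{\phi_{\max}}$ denote. Interpreting them \emph{pointwise}, as the actors attaining the smallest and largest ensemble estimate at each $s'$, makes the sandwich hold state by state so that the monotonicity step goes through cleanly; reading them instead as a single fixed actor held constant across all states would break the lower and upper bounds. Stating this convention explicitly, consistent with the min/max-critic convention already invoked in Theorem~3, is the one step that needs care, after which the remaining manipulations are routine.
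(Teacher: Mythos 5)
Your proposal is correct and follows essentially the same route as the paper: the paper's own proof invokes its Lemma~1 (the pointwise sandwich $\hat{V}_{\phi_{\min}}(s^{\prime};C)\leq\hat{V}_{A}(s^{\prime};C)\leq\hat{V}_{\phi_{\max}}(s^{\prime};C)$, i.e.\ the median lies between the extreme per-actor estimates), passes to expectations, and subtracts the common reference term $\mathbb{E}[V_{\phi^{*}}(s^{\prime})]$ exactly as you do. Your added remark on reading $\phi_{\min}$ and $\phi_{\max}$ pointwise (state by state) is a clarification the paper leaves implicit, but it does not change the argument.
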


Hence, the estimation error of MAMC is in between the maximum and
minimum of SAMC and MASC. The proofs of above theorems will be given
in supplementary material Section \ref{sec:Proof} due to space limitation.

\section{Experimental Results\label{sec:Experimental-Results}}

This section examines the performance of the proposed MAMC method
in terms of effectiveness and efficiency through experiments. Further
analysis is made for showing the effectiveness of the proposed components,
the sensitivity of introduced hyperparameters, and the validity of
the MAMC.

\subsection{Experimental Settings\label{subsec:Experimental-Settings}}

The experiments are conducted on a set of five test environments chose
from the well-known MuJuCo benchmark \cite{MuJoCo}, including Hopper-v5,
HalfCheetah-v5, Walker2d-v5, Ant-v5, and Humanoid-v5. These environments
are all have continuous state and action spaces with different dimensions.
Regarding the dimensionality, the difficulty of each environment can
be regarded as either simple (Hopper-v5), medium (HalfCheetah-v5,
Walker2d-v5), or hard (Ant-v5, Humanoid-v5). The properties of these
environment can be found in the supplementary material Section \ref{subsec:MuJoco}.

This study selects four state-of-the-art RL methods for performance
comparison, including two with deterministic policy: TD3\cite{TD3_ICML2018}
and DARC\cite{DARC_AAAI2022}, and two with stochastic policy: SAC\cite{SAC_ICML2018},
REDQ\cite{REDQ_ICLR2021}. Both TD3 and SAC used a single actor with
two critics. REDQ also adopts a single actor but with ten critics,
while DARC exploits two actors and two critics. The proposed MAMC
utilizes ten actors and ten critics. An analysis on the number of
actors and critics can be found in the supplementary material Section
\ref{subsec:The-number-of-AC}. In addition, as the MAMC considers
sample multiple reuse (SMR) \cite{SMR_InformationSciences2024}, all
the four test methods are implemented as SMR versions, which are reported
with better performance than the original versions, for a fair comparison.

The hyperparameter settings for the four baseline methods follow their
original suggestions. The termination criterion is set to 300k environmental
steps. All experiments conducted 10 trials, and each trial is an average
over twenty seeds for return if not stated. All figures are uniformly
smoothed. For significance analysis, this study adopts the Wilcoxon
ranksum test with $.05$ significant level. The error bars are within
the range $[\mu-\sigma,\mu+\sigma]$, which are generated by standard
deviations with the assumption of normally distributed errors. For
more details about the experimental settings, please refer to the
supplementary material Section \ref{sec:Detailed-Experimental-Settings}.

\begin{table}
\caption{\label{tab:Wilcoxon-signed-rank-quality}Wilcoxon signed rank test
for TD3 and DARC compared with the MAMC at early (100k), middle (200k),
and late stage (300k). The win/tie/lose denotes the number of environments
that the MAMC is significantly superior, equal, and inferior to the
corresponding test method.}

\centering{}%
\begin{tabular}{lcccc}
\toprule 
Stage (win/tie/lose) & \multicolumn{1}{c}{TD3-SMR} & \multicolumn{1}{c}{DARC-SMR} & \multicolumn{1}{c}{SAC-SMR} & \multicolumn{1}{c}{REDQ-SMR}\tabularnewline
\midrule 
\multicolumn{1}{l}{100k} & 3/2/0 & 2/2/1 & 3/2/0 & 0/4/1\tabularnewline
\multicolumn{1}{l}{200k} & 2/3/0 & 1/4/0 & 2/2/1 & 0/4/1\tabularnewline
\multicolumn{1}{l}{300k} & 2/3/0 & 1/3/1 & 1/4/0 & 0/3/2\tabularnewline
\bottomrule
\end{tabular}
\end{table}

\begin{figure}
\begin{centering}
\includegraphics[width=0.96\columnwidth]{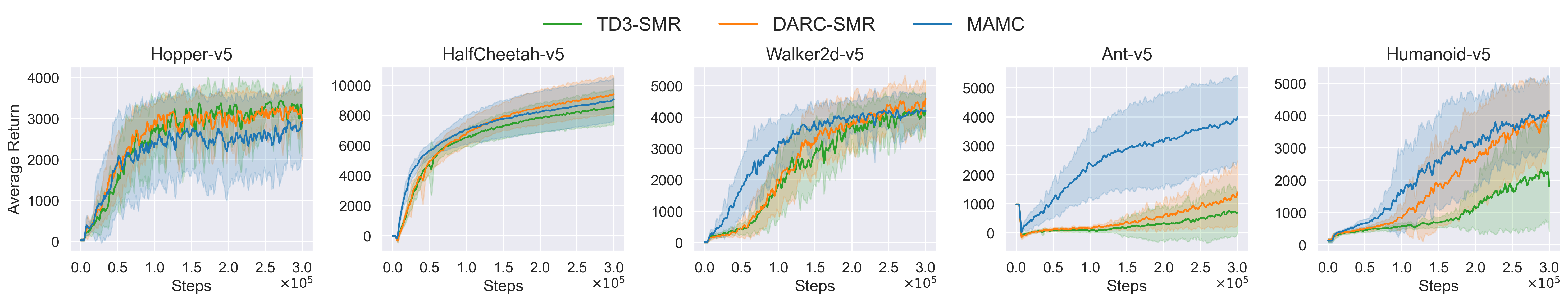}
\par\end{centering}
\centering{}\includegraphics[width=0.96\columnwidth]{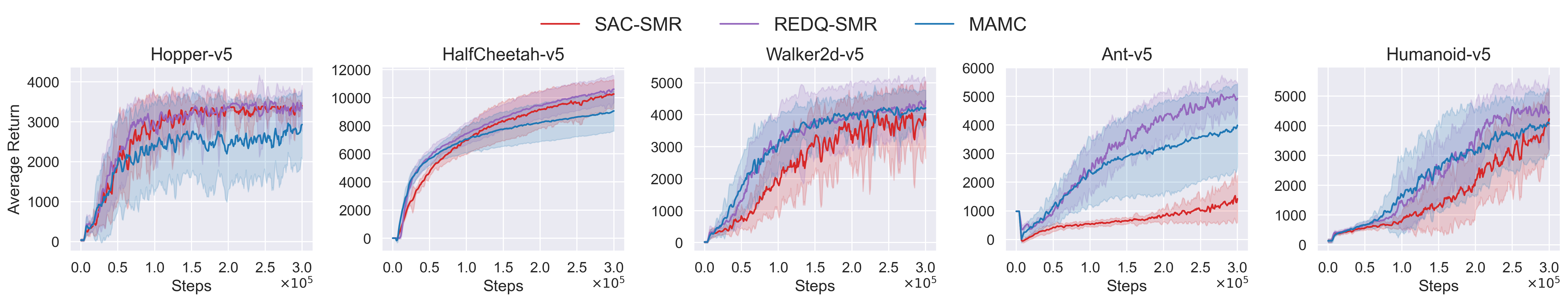}\caption{\label{fig:Average-return-TD3-SAC}Average return against environment
steps for TD3-based and SAC-based methods by comparison with the MAMC
on the five environments}
\end{figure}

\subsection{Effectiveness}

Table \ref{tab:Wilcoxon-signed-rank-quality} compares the Wilcoxon
signed rank test for TD3 and DARC compared with the MAMC at early
(100k), middle (200k), and late stage (300k). The details are provided
in supplementary material Section \ref{subsec:Mean-and-stdandard}.
At early stage, the MAMC achieves better quality than the two deterministic
methods TD3-SMR and DARC-SMR. Comparing to the two SAC-based methods,
the MAMC outperforms SAC-SMR but performs slightly worse than REDQ-SMR
on the Hopper-v5 environment. At middle stage, the MAMC still betters
TD3-SMR and DARC-SMR, yet the improvement becomes smaller than that
at early stage. As for the two SAC-based methods, the trend on REDQ-SMR
keeps, while the improvement on SAC-SMR also decreases. At late stage,
the lead to TD3-SMR, DARC-SMR, and SAC-SMR further shrinks that the
MAMC is slightly superior to TD3-SMR and SAC-SMR, but is comparable
to DARC-SMR. The REDQ-SMR further surpasses the MAMC on Humanoid-v5
environment. These results reflect the merits of MAMC at early and
middle stage, and the demerit at late stage.

\subsection{Efficiency}

Figure \ref{fig:Average-return-TD3-SAC} draws the average return
against environment steps for TD3-based and SAC-based methods by comparison
with the MAMC on the five environments. Compared with TD3-based methods,
the MAMC gains faster convergence on the three more complicated environments,
i.e., Walker2d-v5, Ant-v5, and Humanoid-v5. Similarly, the MAMC converges
faster than SAC-SMR on these three environments, yet the REDQ-SMR
converges nicer than the MAMC on all except Walker2d-v5. These results
validate the efficiency of the MAMC against the two deterministic
method TD3-SMR and DARC-SMR, and the simpler stochastic method SAC-SMR.

\subsection{Components Analysis}

\begin{table}
\caption{\label{tab:mean-std-MO-SO}Average and standard deviation of return
for the MAMC with single-objective and multi-objective actor selection
strategies on Ant-v5 over eight trials at early (100k), middle (200k),
and late stage (300k). The bold symbol implies the highest value.}

\centering{}%
\begin{tabular}{lrrrrrr}
\toprule 
Stage & \multicolumn{2}{c}{100k} & \multicolumn{2}{c}{200k} & \multicolumn{2}{c}{300k}\tabularnewline
\cmidrule(lr){2-3}\cmidrule(lr){4-5}\cmidrule(l){6-7}Ant-v5 & SO & MO & SO & MO & SO & MO\tabularnewline
\midrule 
Mean & 1980 & \textbf{2701} & 2805 & \textbf{3611} & 3395 & \textbf{4276}\tabularnewline
Std. & 800 & 1235 & 1014 & 1631 & 1278 & 1260\tabularnewline
\bottomrule
\end{tabular}
\end{table}

\begin{figure}
\begin{centering}
\includegraphics[width=0.2\columnwidth]{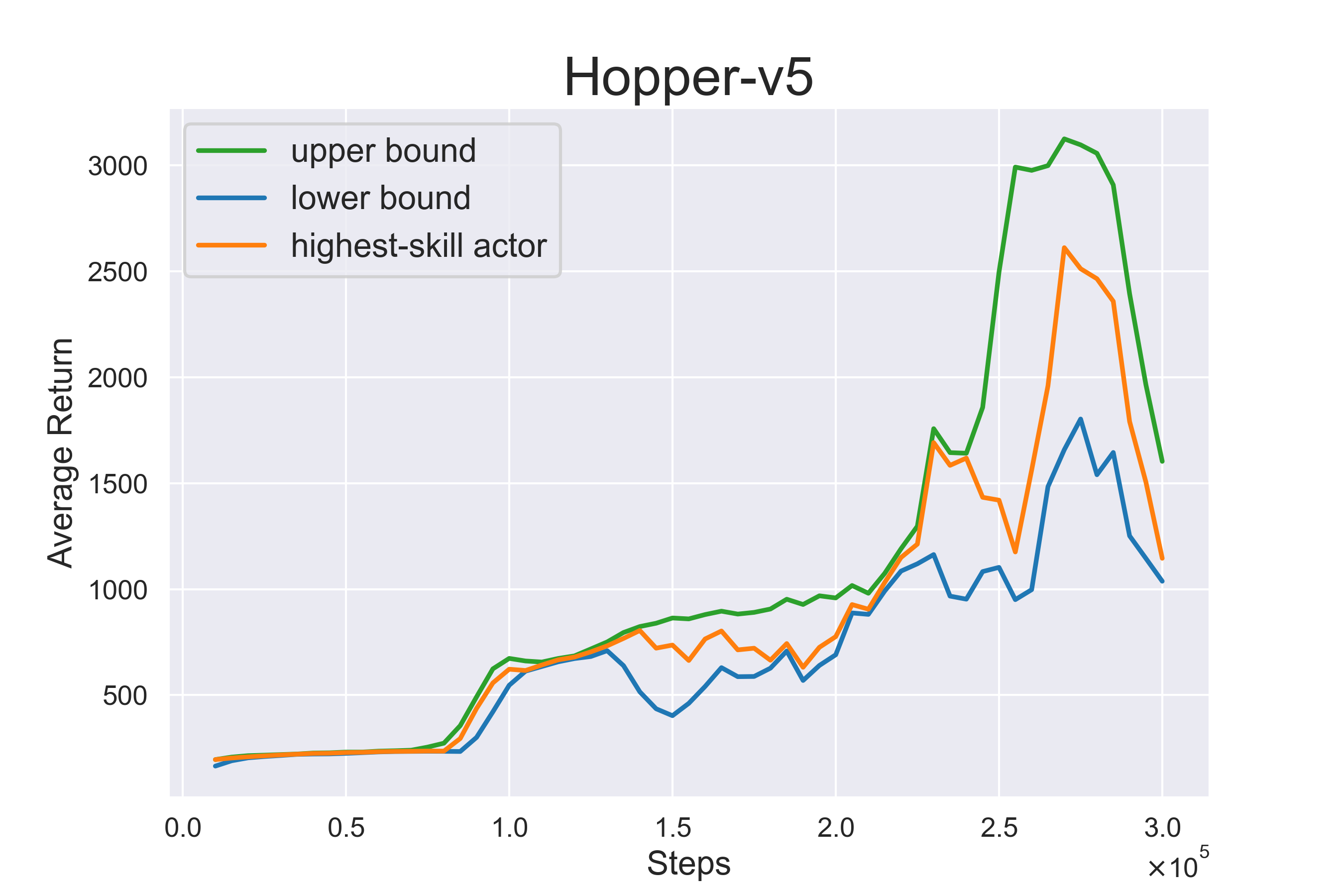}\includegraphics[width=0.2\columnwidth]{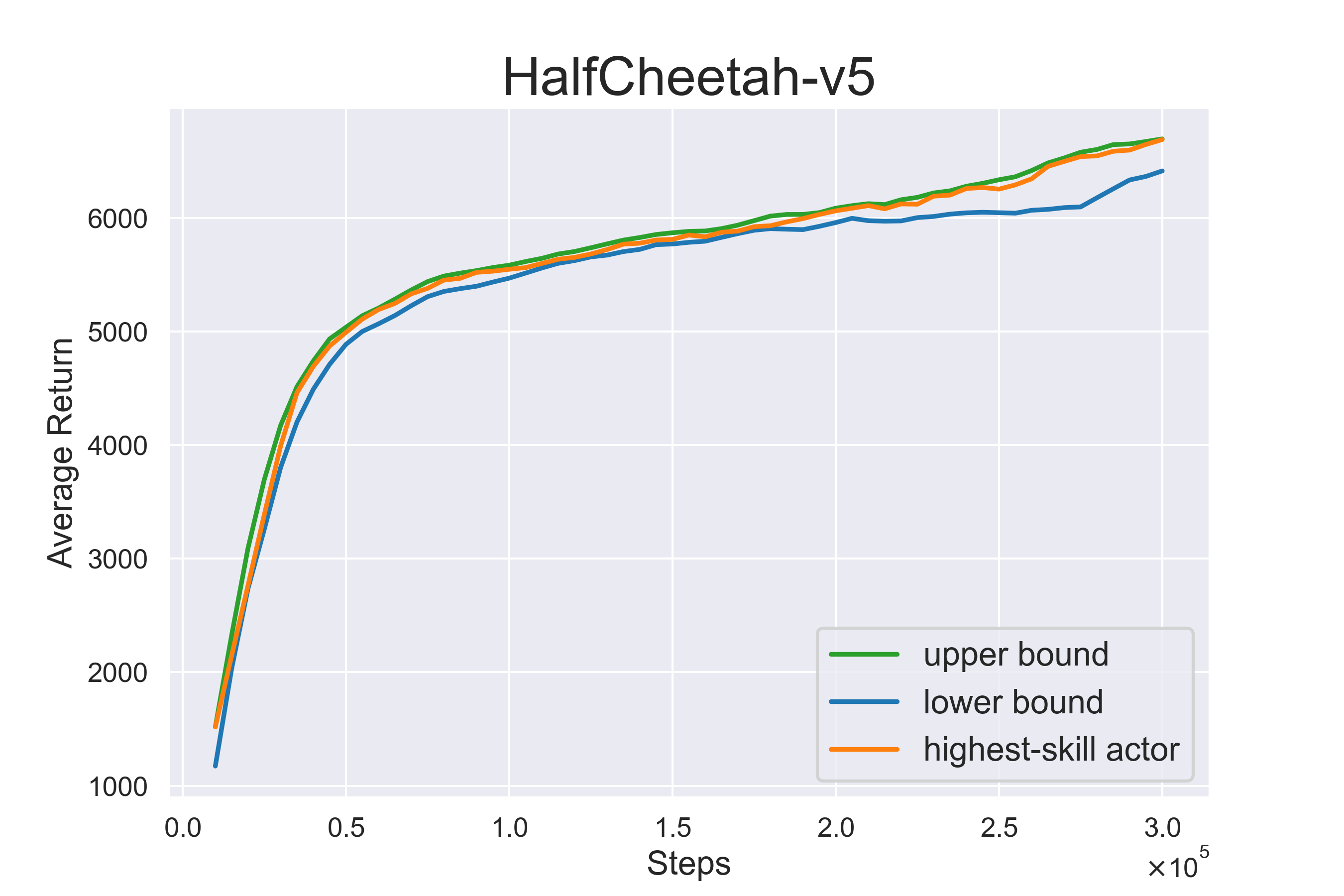}\includegraphics[width=0.2\columnwidth]{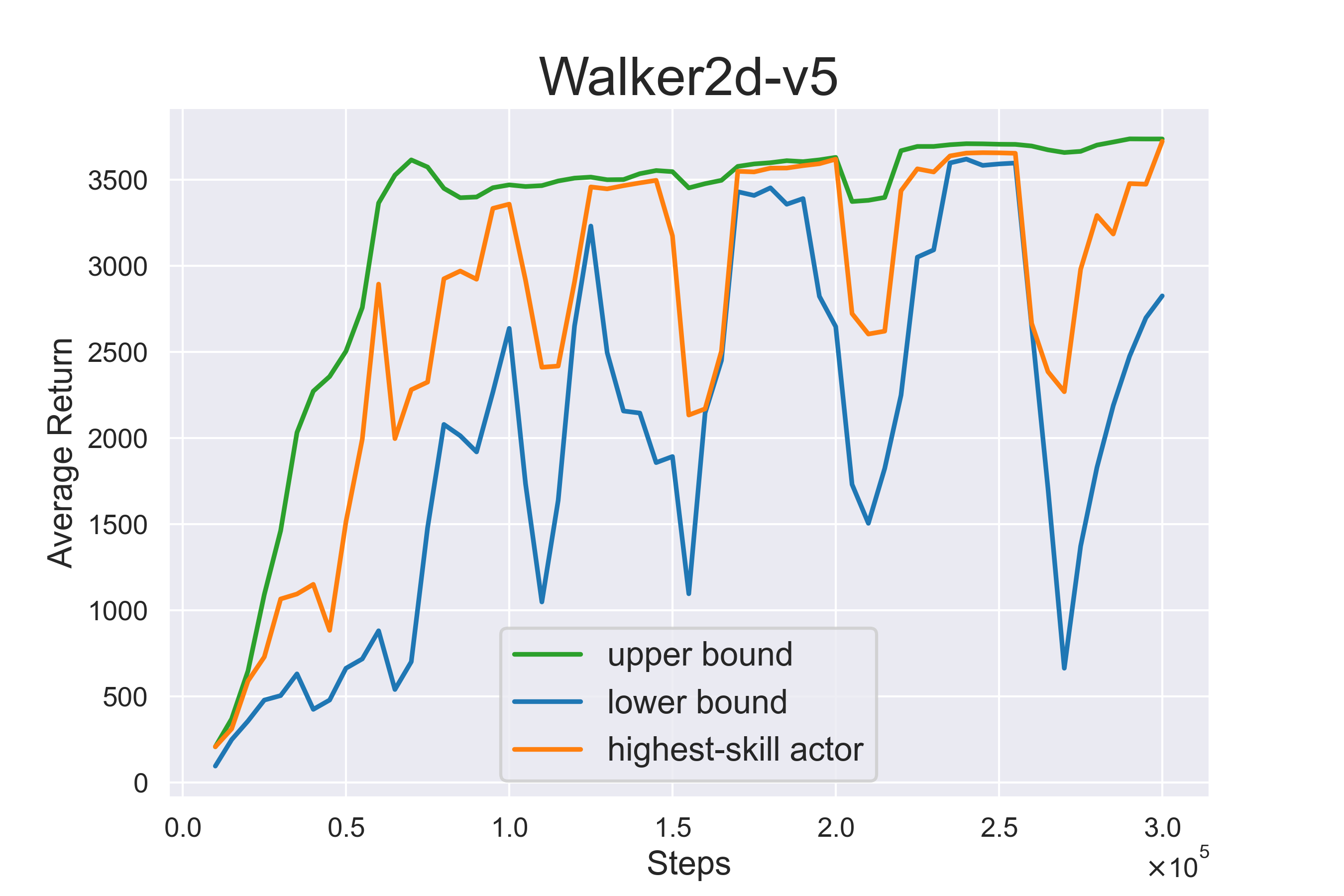}\includegraphics[width=0.2\columnwidth]{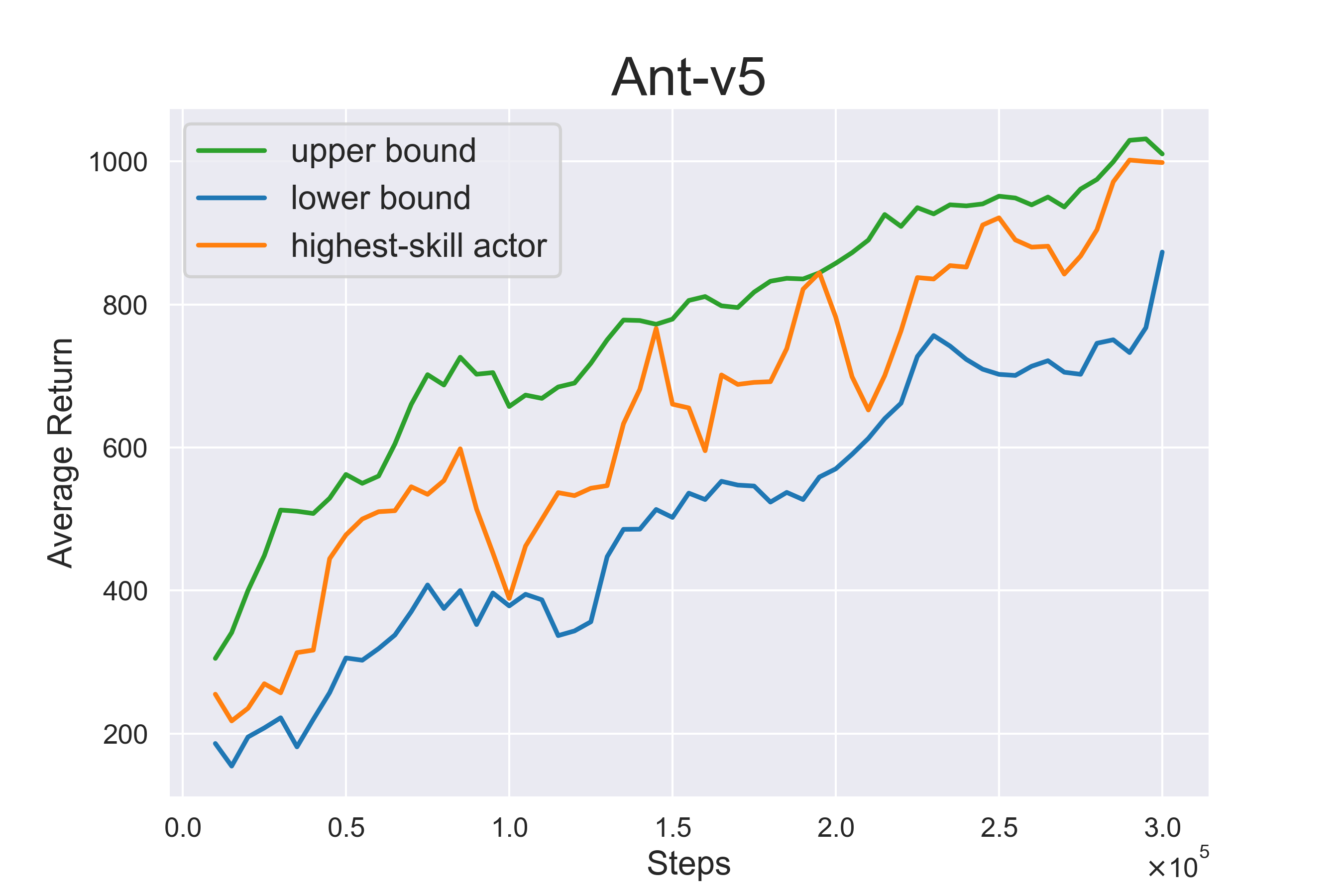}\includegraphics[width=0.2\columnwidth]{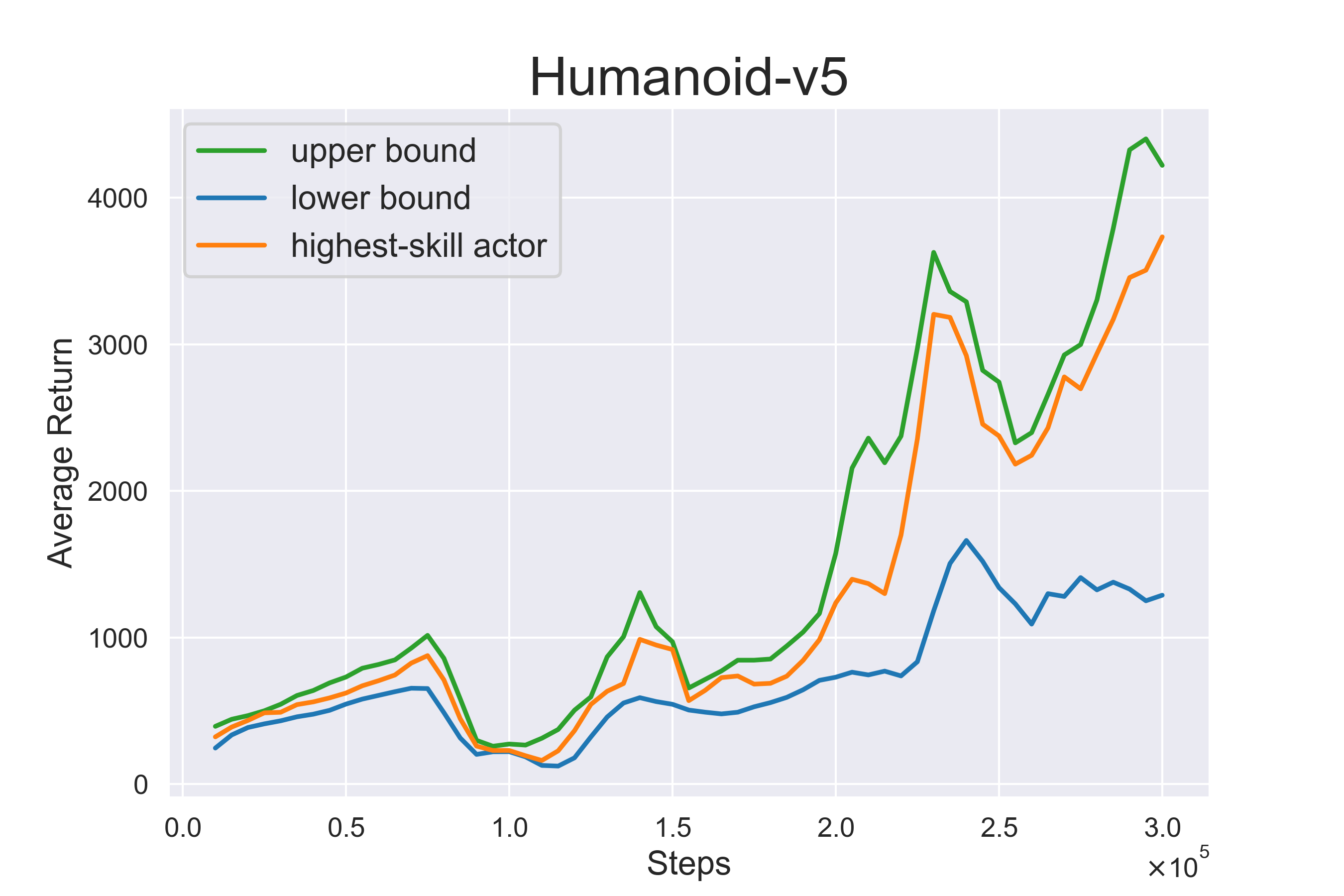}
\par\end{centering}
\centering{}\caption{\label{fig:AvgReturn-UB-LB}Average return for the best, worst, and
skilled (selected) actors in the MAMC in a specific trial on the five
test environments}
\end{figure}

Table \ref{tab:mean-std-MO-SO} lists the average and standard deviation
of return for the MAMC with single-objective (MAMC-SO) and multi-objective
actor selection strategies on Ant-v5 over eight trials. The MAMC-SO
averages the skill and creativity factors and selects the top actors
by sorting for exploration. The exploitation selection mechanism for
MAMC-SO and MAMC is the same. From the table, the MAMC performs better
than MAMC-SO at all the three stages, which verifies the effectiveness
of the proposed multi-objective actor selection mechanism.

Figure \ref{fig:AvgReturn-UB-LB} plots the average return for the
best (upper bound), worst (lower bound), and skilled (selected) actors
in the MAMC in a specific trial on the five test environments. On
HalfCheetah-v5 and Humanoid-v5, the MAMC is capable of selecting good
actor approaching the upper bound, to wit, the best actor. For Hopper-v5,
Walker2d-v5, and Ant-v5, the MAMC tracks the moving upper bound, and
in most of the time the selected actor having quality beyond the average
of upper and lower bounds. These results validate the effectiveness
of the proposed skill factor for actor selection for exploitation.%

\subsection{Sensitivity Analysis}

\begin{table}
\caption{\label{tab:mean-std-q}Average and standard deviation of return for
the MAMC with different quantile parameter $q$}

\centering{}%
\begin{tabular}{lrrrrr}
\toprule 
$q$ & \multicolumn{1}{c}{$=0.1$} & \multicolumn{1}{c}{$=0.2$} & \multicolumn{1}{c}{$=0.3$} & $=0.4$ & $=0.5$\tabularnewline
\midrule
HalfCheetah-v5 & 9119$\pm$1077 & 8153$\pm$1115 & 9117$\pm$1070 & 9191$\pm$1043 & \textbf{9466}$\pm$1256\tabularnewline
\multicolumn{1}{l}{Walker2d-v5} & 3188$\pm$1516 & \textbf{4324}$\pm$1038 & 4083$\pm$927 & 3385$\pm$1039 & 1406$\pm$561\tabularnewline
\bottomrule
\end{tabular}
\end{table}

\begin{figure}
\begin{centering}
\includegraphics[width=0.2\columnwidth]{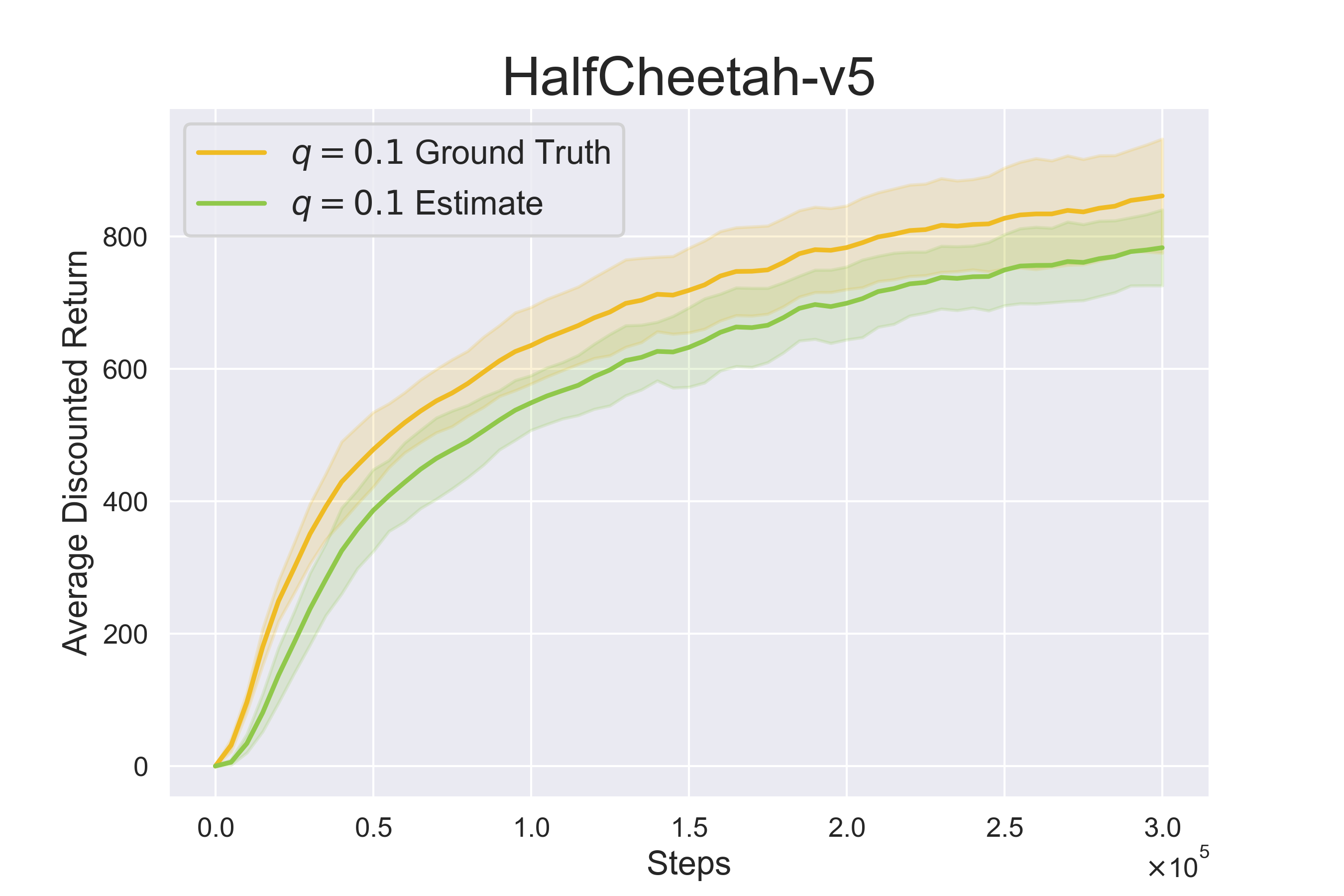}\includegraphics[width=0.2\columnwidth]{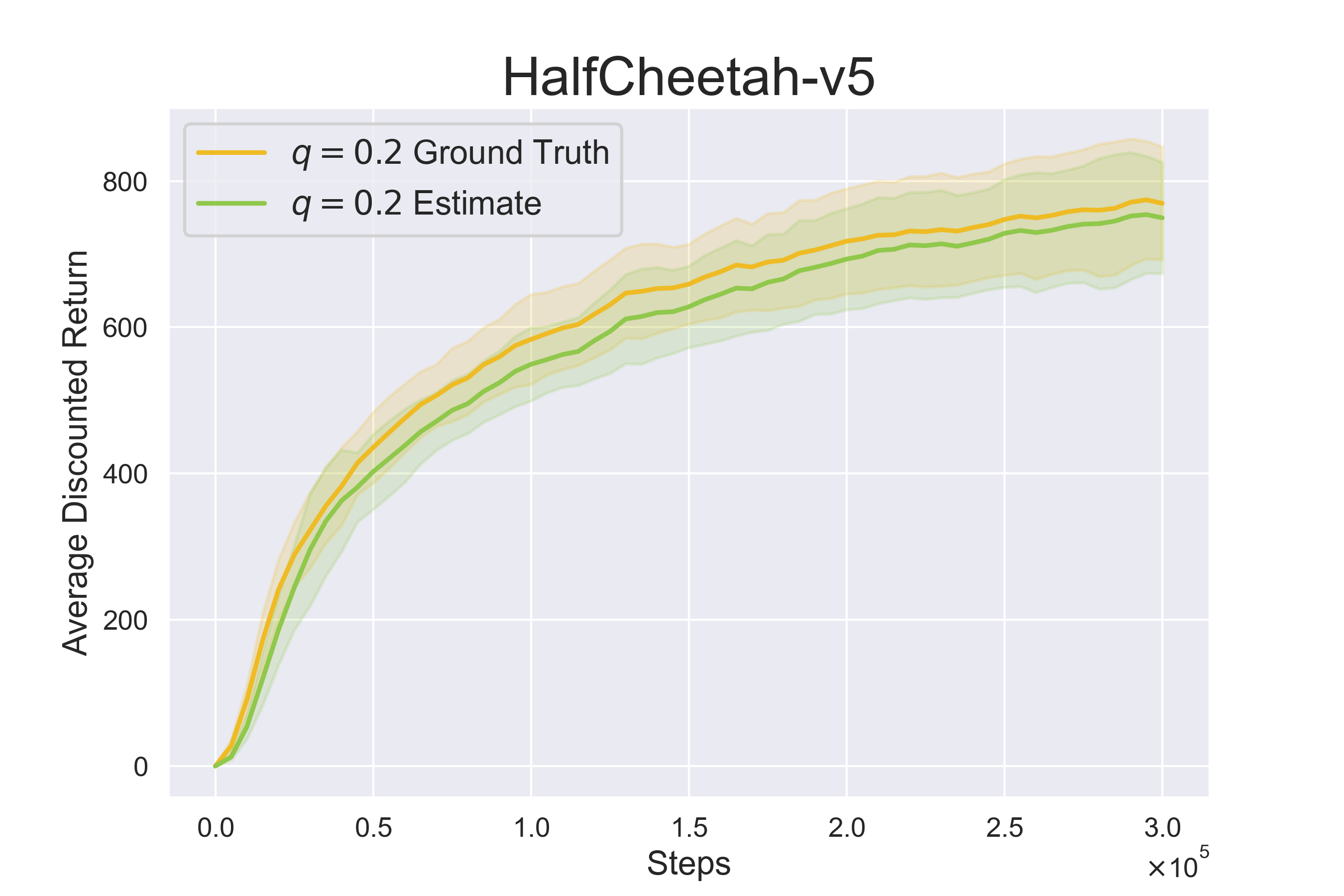}\includegraphics[width=0.2\columnwidth]{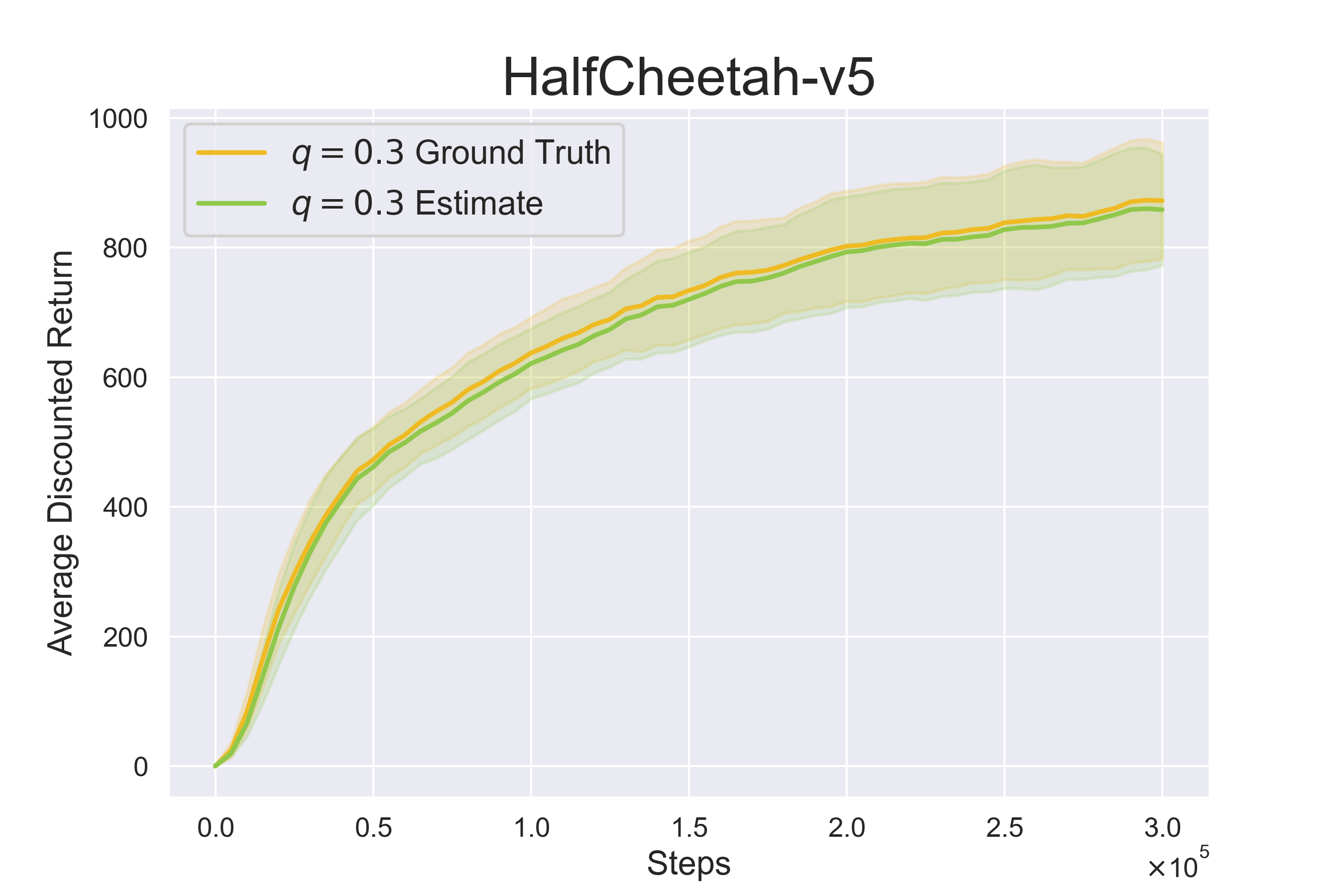}\includegraphics[width=0.2\columnwidth]{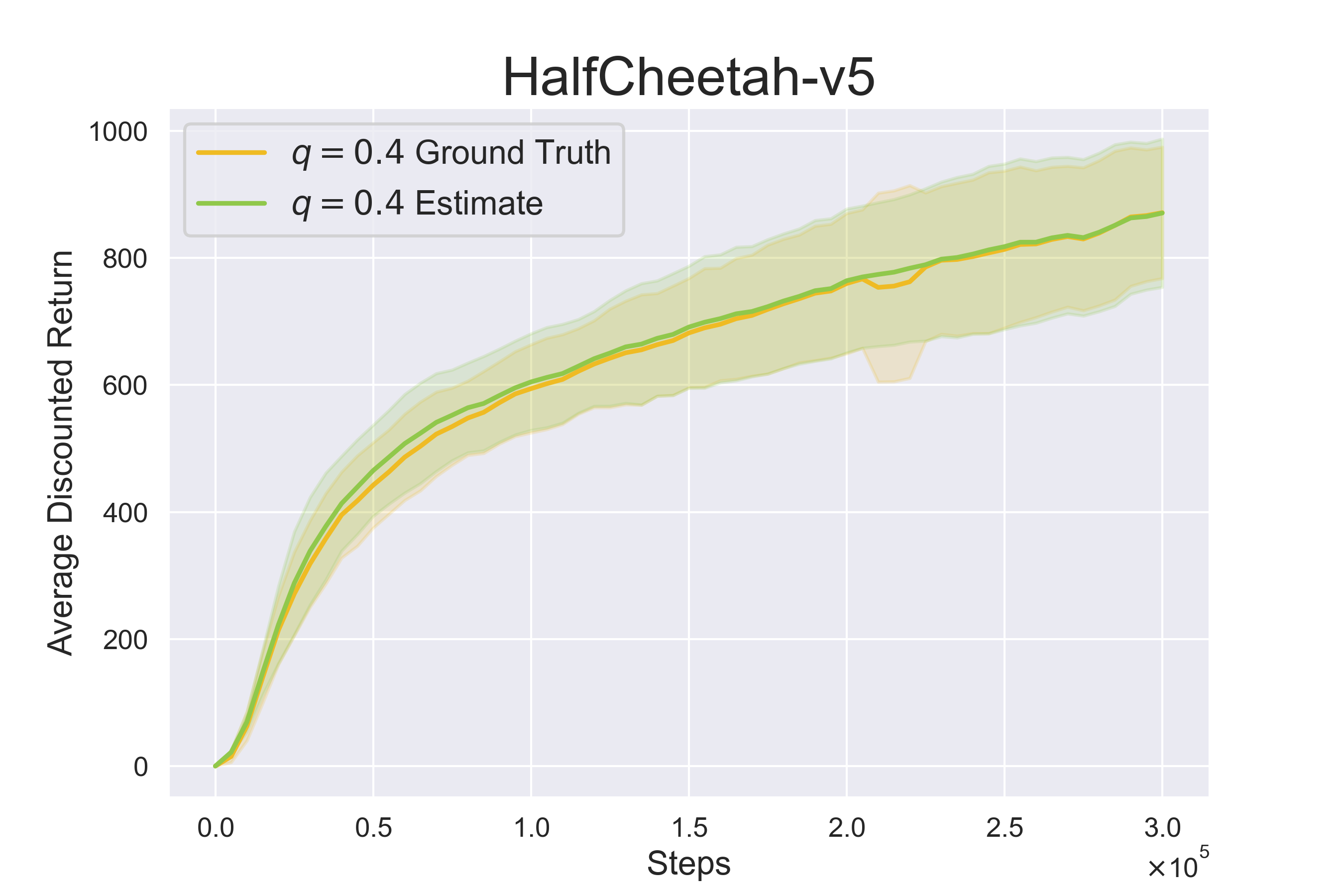}\includegraphics[width=0.2\columnwidth]{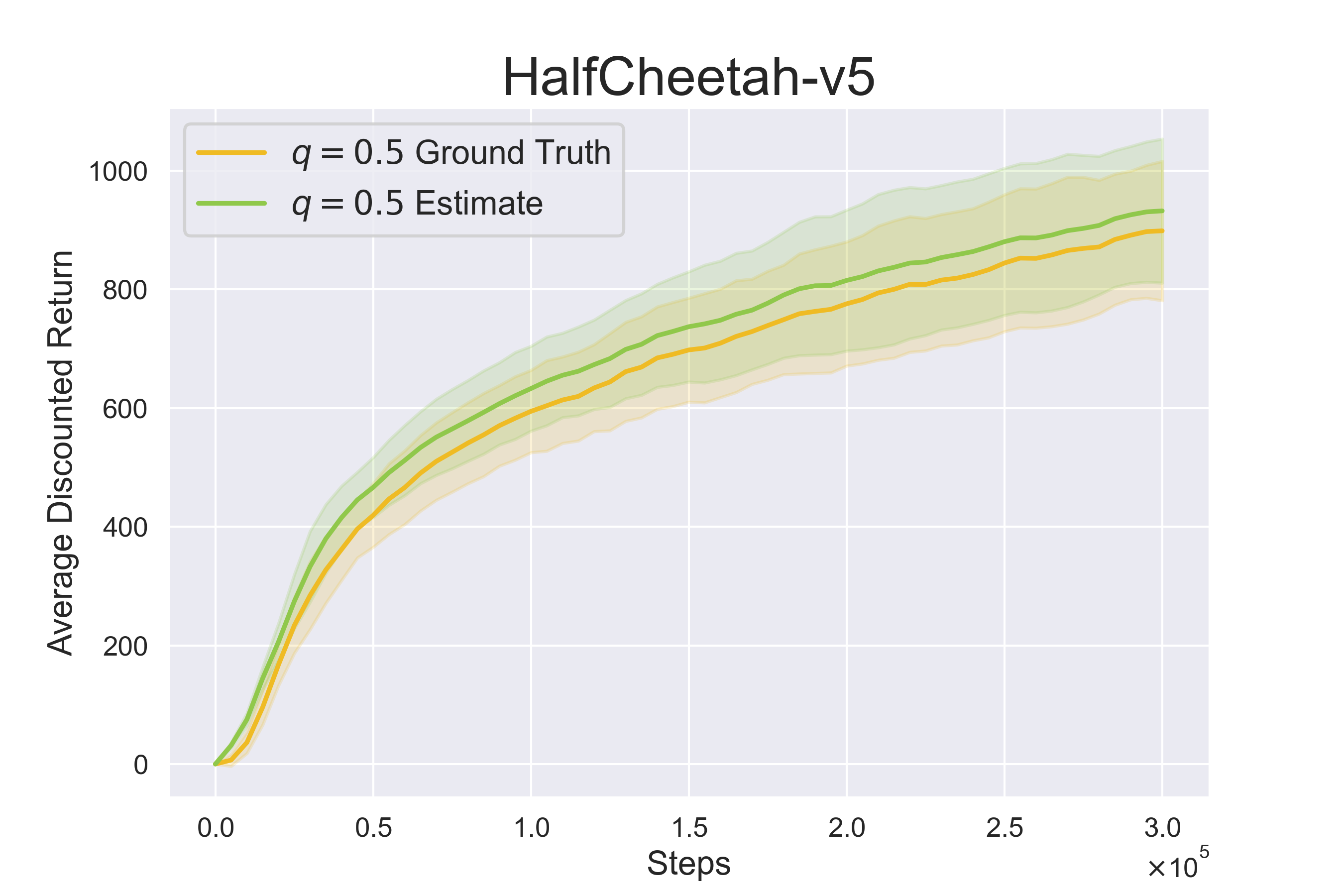}
\par\end{centering}
\begin{centering}
\includegraphics[width=0.2\columnwidth]{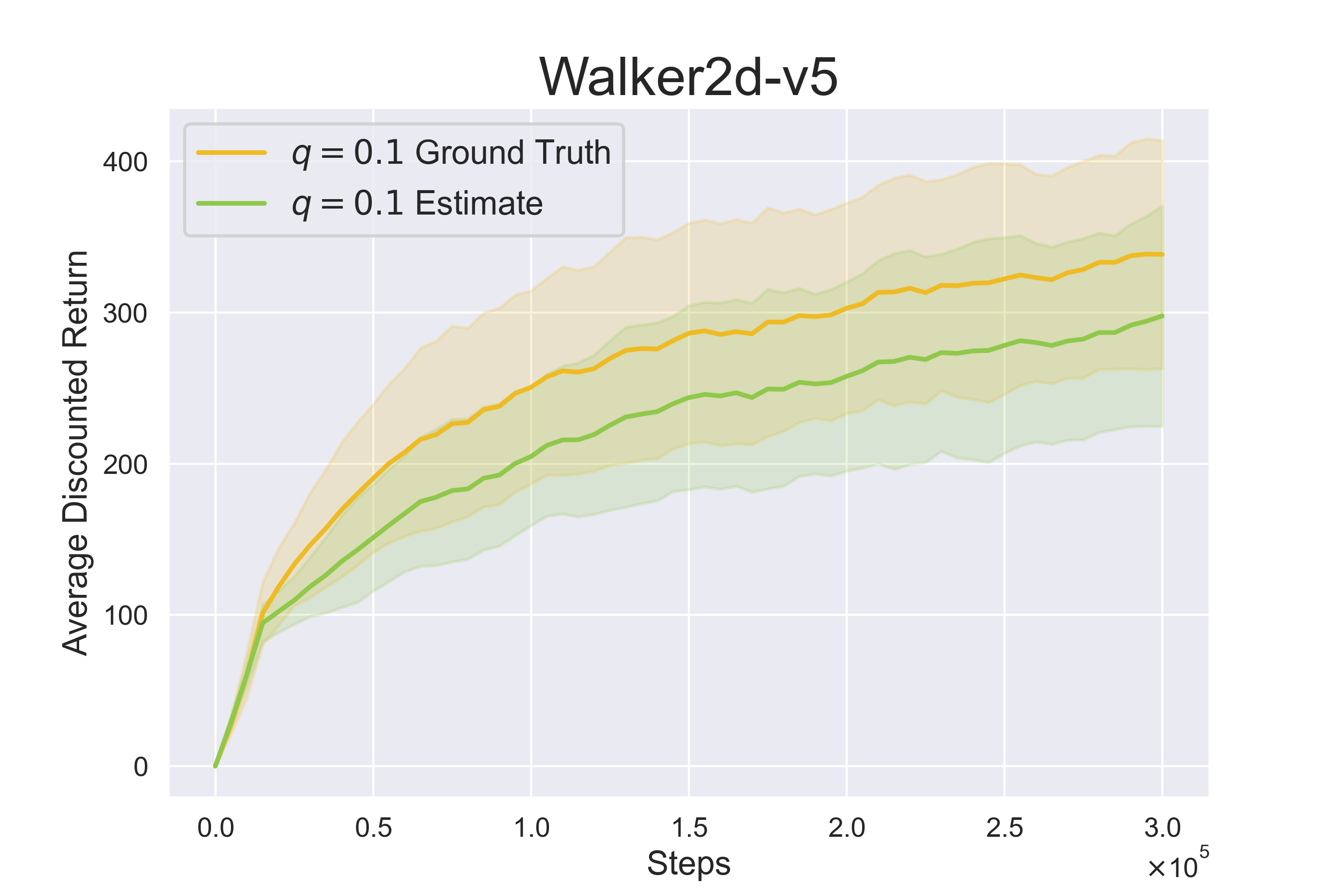}\includegraphics[width=0.2\columnwidth]{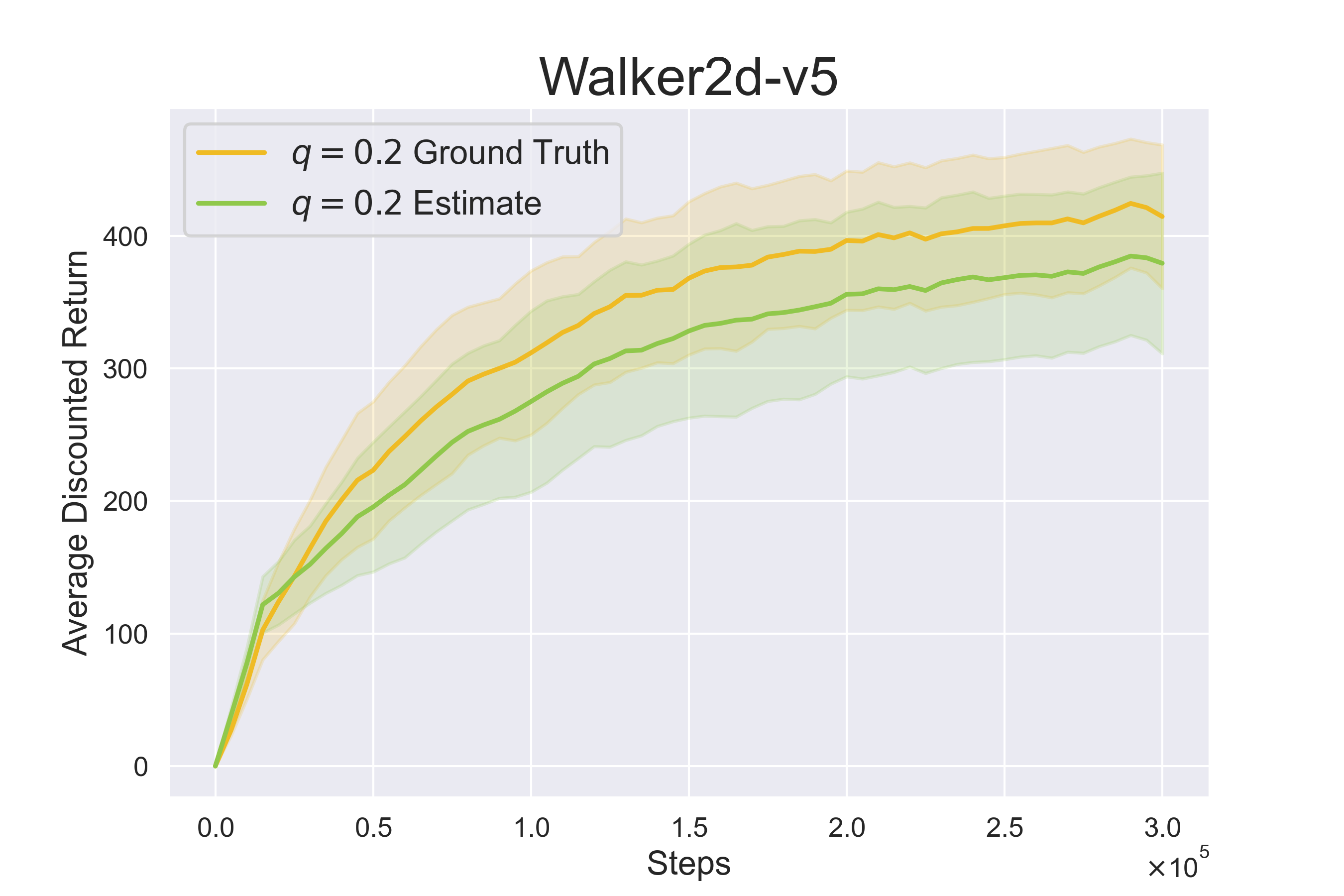}\includegraphics[width=0.2\columnwidth]{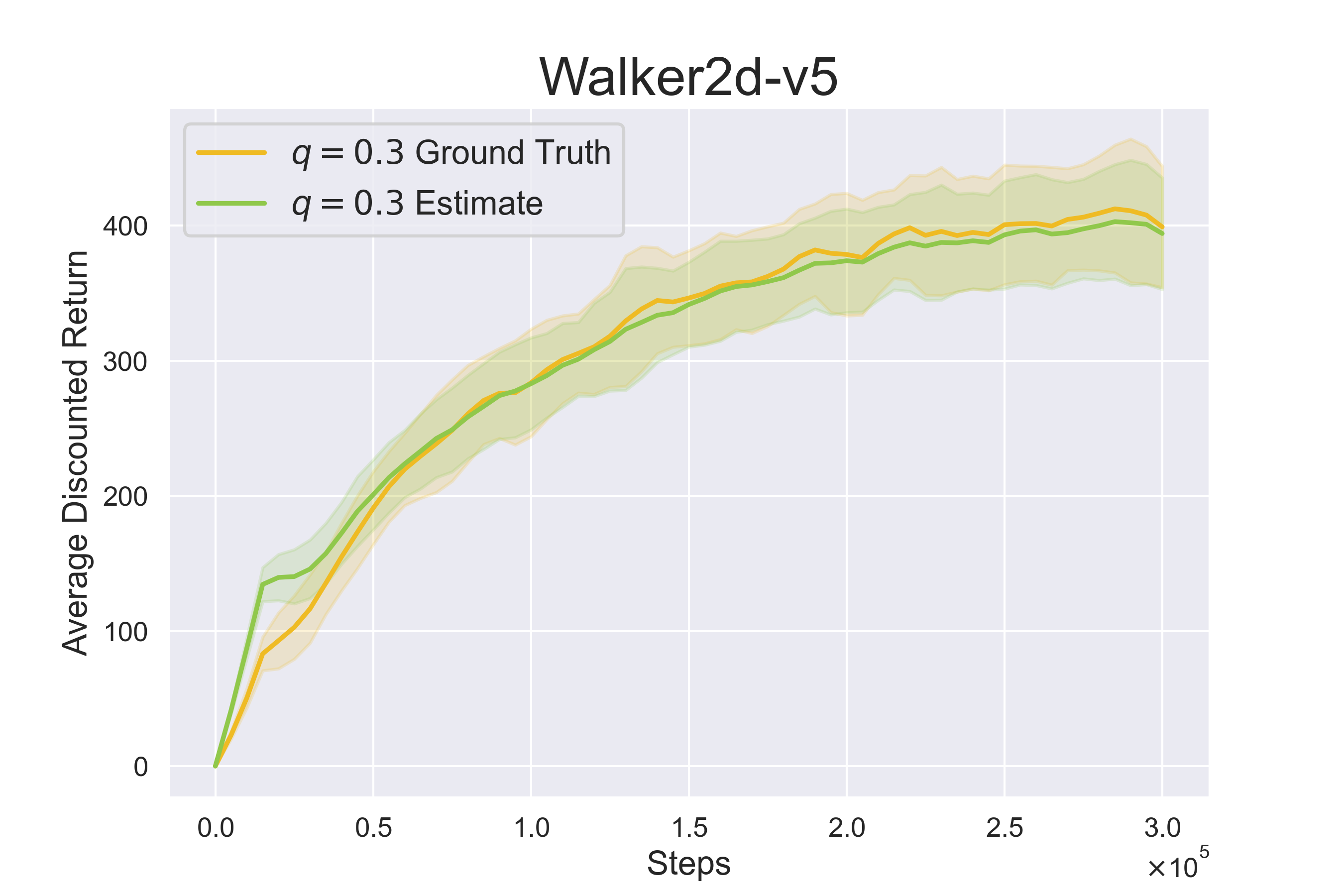}\includegraphics[width=0.2\columnwidth]{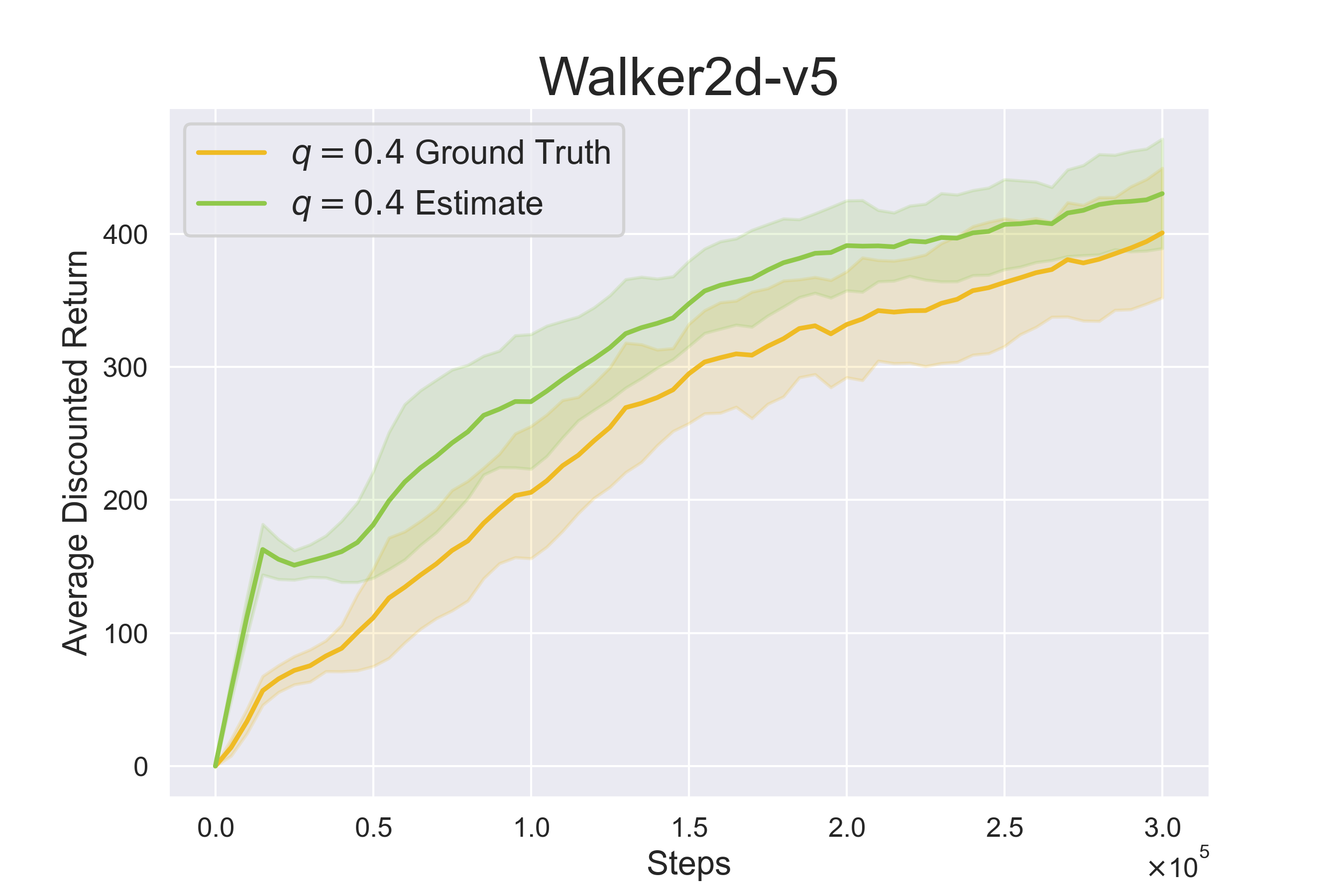}\includegraphics[width=0.2\columnwidth]{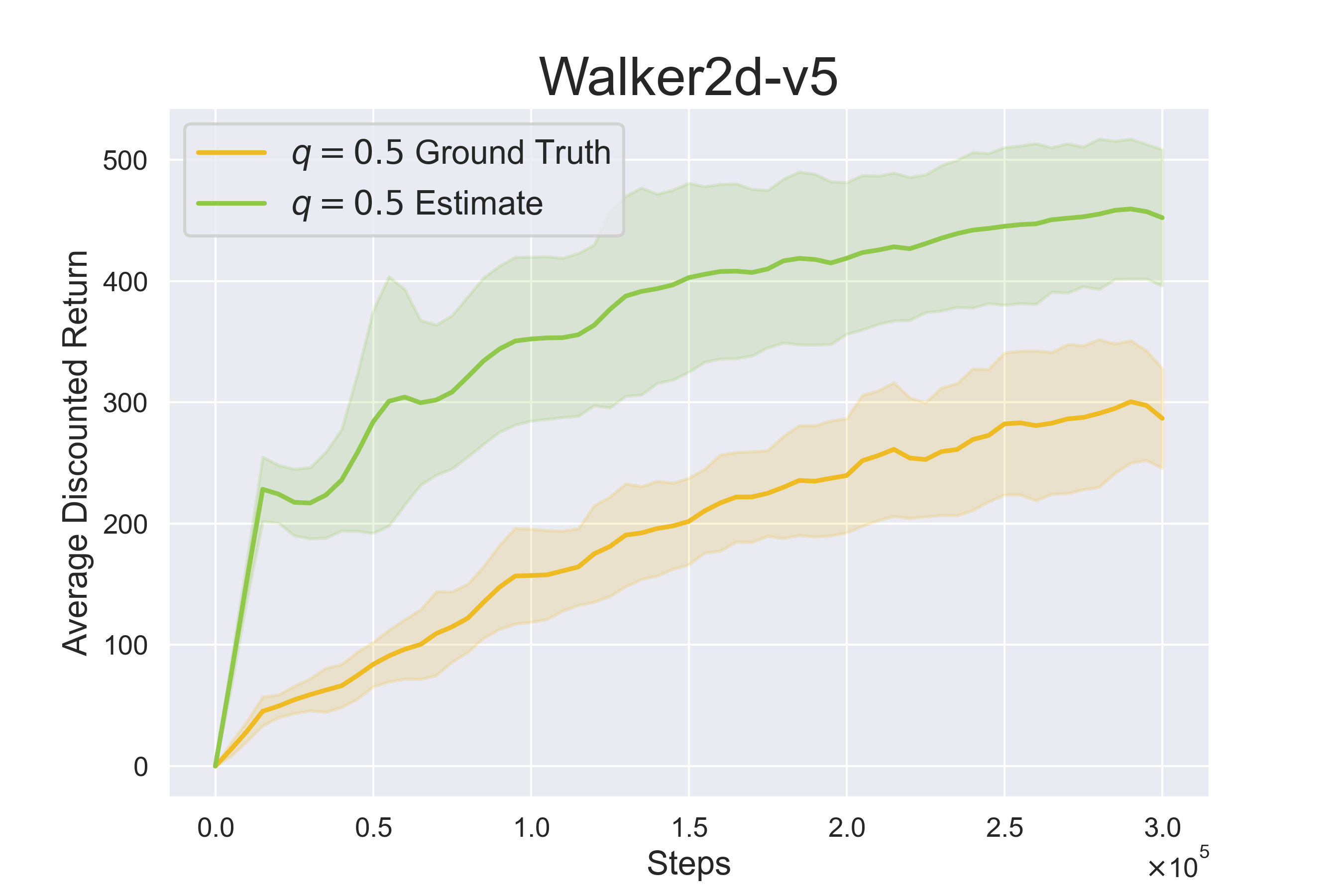}
\par\end{centering}
\centering{}\caption{\label{fig:Error-quantile}Estimated and ground-truth average discounted
return against environment steps for the MAMC with different quantile
parameters $q$ on HalfCheetah-v5 and Walker2d-v5}
\end{figure}

Figure \ref{fig:Error-quantile} plots the estimated and ground-truth
average discounted return against environment steps for MAMC with
different quantile parameters $q$ on HalfCheetah-v5 and Walker2d-v5.
It is obvious that the estimated value increases as $q$ increases;
the best $q$ in terms of the smallest distance to ground-truth value
is 0.4 for HalfCheetah-v5 and 0.3 for Walker2d-v5. However, the values
are inconsistent to the best $q$ in terms of the average return,
which is 0.5 for HalfCheetah-v5 and 0.2 for Walker2d-v5 (cf.~Table
\ref{tab:mean-std-q}). That is, the setting of quantile parameters
$q$ should also considers the environmental preferences of optimism
and pessimism. In general, a range between 0.2 and 0.3 is a good setting
for environments which favor pessimism, and a value between 0.3 and
0.4 is nice for optimism cases; thus, a robust $q$ value may near
0.3, but the best one for a specific environment still needs to be
investigated. 

\subsection{Validity Analysis\label{subsec:Validity-analysis}}

The proposed MAMC is based on deterministic policy, and the results
have shown that the MAMC can ameliorate the performance of TD3-SMR
and REDQ-SMR. The MAMC is also beneficial in comparison to SAC-SMR,
a simple but powerful method with stochastic policy. Past studies
have discovered the potential of stochastic policy over deterministic
policy, and this may be the weakness of the MAMC, which is considered
as the main reason to be surpassed by REDQ-SMR. 

\section{Conclusions\label{sec:Conclusions}}

This study proposes a multi-actor multi-critic deep deterministic
reinforcement learning method. The MAMC includes a selection of actors
for exploration using skill and creativity factors, an ensemble target
value based on a predefined quantile parameter, and a selection of
best actor regarding skill factor for exploitation. Theoretical analysis
proves the MAMC having bounded estimation error, and learning stability
over SAMC and MASC. From experimental results, the MAMC excels TD3-SMR,
DARC-SMR, and SAC-SMR with better quality and faster convergence on
the selected environments in MuJoCo. The validity analysis shows a
weakness of deterministic based method and is also a possible future
extension. Another promising orientation for future research is to
adapt the quantile parameter to address the issue of estimation accuracy
by balancing optimism and pessimism.%

{
\bibliographystyle{plain}
\medskip{}
\bibliography{MAMC2}

\begin{thebibliography}{10}

\bibitem{MaxMixMinQL_Abliz2024}
Patig\"ul Abliz.
\newblock A controlling estimation bias method: {Max-Mix-Min} estimator for
  {Q}-learning.
\newblock {\em The Journal of Supercomputing}, 80(13):19248--19273, 2024.

\bibitem{AvgDQN_ICML2017}
Oron Anschel, Nir Baram, and Nahum Shimkin.
\newblock Averaged-{DQN}: {V}ariance reduction and stabilization for deep
  reinforcement learning.
\newblock In {\em Proceedings of the 34th International Conference on Machine
  Learning}, pages 176--185, 2017.

\bibitem{DLbook_goodfellow2016}
Yoshua Bengio, Ian Goodfellow, and Aaron Courville.
\newblock {\em Deep learning}, volume~1.
\newblock MIT press, 2016.

\bibitem{GPL_AAAI2023}
Edoardo Cetin and Oya Celiktutan.
\newblock Learning pessimism for reinforcement learning.
\newblock In {\em Proceedings of the AAAI Conference on Artificial
  Intelligence}, pages 6971--6979, 2023.

\bibitem{UCBQ_2017archive}
Richard~Y. Chen, Szymon Sidor, Pieter Abbeel, and John Schulman.
\newblock {UCB} exploration via q-ensembles.
\newblock {\em CoRR}, abs/1706.01502, 2017.

\bibitem{REDQ_ICLR2021}
Xinyue Chen, Che Wang, Zijian Zhou, and Keith Ross.
\newblock Randomized ensembled double q-learning: Learning fast without a
  model.
\newblock In {\em 9th International Conference on Learning Representations
  (ICLR)}, 2021.

\bibitem{OAC_NEURIPS2019}
Kamil Ciosek, Quan Vuong, Robert Loftin, and Katja Hofmann.
\newblock Better exploration with optimistic actor critic.
\newblock In {\em Advances in Neural Information Processing Systems}, 2019.

\bibitem{NSGAII_Deb2002}
K.~Deb, A.~Pratap, S.~Agarwal, and T.~Meyarivan.
\newblock A fast and elitist multiobjective genetic algorithm: Nsga-ii.
\newblock {\em IEEE Transactions on Evolutionary Computation}, 6(2):182--197,
  2002.

\bibitem{TD3_ICML2018}
Scott Fujimoto, Herke van Hoof, and David Meger.
\newblock Addressing function approximation error in actor-critic methods.
\newblock In {\em Proceedings of the 35th International Conference on Machine
  Learning}, pages 1587--1596, 2018.

\bibitem{SAC_ICML2018}
Tuomas Haarnoja, Aurick Zhou, Pieter Abbeel, and Sergey Levine.
\newblock Soft actor-critic: Off-policy maximum entropy deep reinforcement
  learning with a stochastic actor.
\newblock In {\em Proceedings of the 35th International Conference on Machine
  Learning}, pages 1861--1870, 2018.

\bibitem{DRLsurvey_AAAI2018}
Peter Henderson, Riashat Islam, Philip Bachman, Joelle Pineau, Doina Precup,
  and David Meger.
\newblock Deep reinforcement learning that matters.
\newblock {\em Proceedings of the AAAI Conference on Artificial Intelligence},
  32(1), 2018.

\bibitem{Rainbow_AAAI2018}
Matteo Hessel, Joseph Modayil, Hado Van~Hasselt, Tom Schaul, Georg Ostrovski,
  Will Dabney, Dan Horgan, Bilal Piot, Mohammad Azar, and David Silver.
\newblock Rainbow: {C}ombining improvements in deep reinforcement learning.
\newblock In {\em Proceedings of the 32th AAAI conference on artificial
  intelligence}, 2018.

\bibitem{AC_NIPS1999}
Vijay Konda and John Tsitsiklis.
\newblock Actor-critic algorithms.
\newblock In {\em Advances in Neural Information Processing Systems}, pages
  1008--1014, 1999.

\bibitem{TQC_ICML2020}
Arsenii Kuznetsov, Pavel Shvechikov, Alexander Grishin, and Dmitry Vetrov.
\newblock Controlling overestimation bias with truncated mixture of continuous
  distributional quantile critics.
\newblock In {\em Proceedings of the 37th International Conference on Machine
  Learning}, pages 5556--5566, 2020.

\bibitem{MaxMinQ_ICLR2020}
Qingfeng Lan, Yangchen Pan, Alona Fyshe, and Martha White.
\newblock Maxmin q-learning: Controlling the estimation bias of q-learning.
\newblock In {\em 8th International Conference on Learning Representations
  (ICLR)}, 2020.

\bibitem{SUNRISE_ICML2021}
Kimin Lee, Michael Laskin, Aravind Srinivas, and Pieter Abbeel.
\newblock {SUNRISE}: {A} simple unified framework for ensemble learning in deep
  reinforcement learning.
\newblock In {\em Proceedings of the 38th International Conference on Machine
  Learning}, pages 6131--6141, 2021.

\bibitem{WPVOP_TNNLS2024}
Fan Li, Mingsheng Fu, Wenyu Chen, Fan Zhang, Haixian Zhang, Hong Qu, and Zhang
  Yi.
\newblock Improving exploration in actor--critic with weakly pessimistic value
  estimation and optimistic policy optimization.
\newblock {\em IEEE Transactions on Neural Networks and Learning Systems},
  35(7):8783--8796, 2024.

\bibitem{DDPG_ICLR2016}
Timothy~P. Lillicrap, Jonathan~J. Hunt, Alexander Pritzel, Nicolas Heess, Tom
  Erez, Yuval Tassa, David Silver, and Daan Wierstra.
\newblock Continuous control with deep reinforcement learning.
\newblock In {\em 4th International Conference on Learning Representations,
  {ICLR} 2016}, 2016.

\bibitem{OVD-Explorer_AAAI2024}
Jinyi Liu, Zhi Wang, Yan Zheng, Jianye Hao, Chenjia Bai, Junjie Ye, Zhen Wang,
  Haiyin Piao, and Yang Sun.
\newblock {OVD}-explorer: Optimism should not be the sole pursuit of
  exploration in noisy environments.
\newblock In {\em Proceedings of the AAAI Conference on Artificial
  Intelligence}, pages 13954--13962, 2024.

\bibitem{DARC_AAAI2022}
Jiafei Lyu, Xiaoteng Ma, Jiangpeng Yan, and Xiu Li.
\newblock Efficient continuous control with double actors and regularized
  critics.
\newblock In {\em Proceedings of the AAAI Conference on Artificial
  Intelligence}, pages 7655--663, 2022.

\bibitem{SMR_InformationSciences2024}
Jiafei Lyu, Le~Wan, Xiu Li, and Zongqing Lu.
\newblock Off-policy rl algorithms can be sample-efficient for continuous
  control via sample multiple reuse.
\newblock {\em Information Sciences}, 666:120371, 2024.

\bibitem{DQN_Mnih2015}
V.~Mnih, K.~Kavukcuoglu, D.~Silver, A.~A. Rusu, J.~Veness, Bellemare M.~G.,
  A.~Graves, M.~Riedmiller, A.~K. Fidjeland, G.~Ostrovski, S.~Petersen,
  C.~Beattie, A.~Sadik, I.~Antonoglou, H.~King, D.~Kumaran, D.~Wierstra,
  S.~Legg, and D.~Hassabis.
\newblock Human-level control through deep reinforcement learning.
\newblock {\em Nature}, 518(7540):529--533, 2015.

\bibitem{LLMtraining_Neurips2022}
Long Ouyang, Jeffrey Wu, Xu~Jiang, Diogo Almeida, Carroll Wainwright, Pamela
  Mishkin, Chong Zhang, Sandhini Agarwal, Katarina Slama, Alex Ray, et~al.
\newblock Training language models to follow instructions with human feedback.
\newblock {\em Advances in neural information processing systems},
  35:27730--27744, 2022.

\bibitem{MDP_Puterman1990}
Martin~L Puterman.
\newblock Markov decision processes.
\newblock {\em Handbooks in operations research and management science},
  2:331--434, 1990.

\bibitem{AlphaZero_silver2017}
David Silver, Julian Schrittwieser, Karen Simonyan, Ioannis Antonoglou, Aja
  Huang, Arthur Guez, Thomas Hubert, Lucas Baker, Matthew Lai, Adrian Bolton,
  et~al.
\newblock Mastering the game of go without human knowledge.
\newblock {\em nature}, 550(7676):354--359, 2017.

\bibitem{TD_Sutton1988}
Richard~S. Sutton.
\newblock Learning to predict by the method of temporal differences.
\newblock {\em Machine Learning}, 3:9--44, 08 1988.

\bibitem{sutton2018}
Richard~S. Sutton and Andrew~G. Barto.
\newblock {\em Reinforcement learning: An introduction}.
\newblock MIT Press, 2018.

\bibitem{DRLrobots_AAAI2025}
Chen Tang, Ben Abbatematteo, Jiaheng Hu, Rohan Chandra, Roberto
  Mart{\'\i}n-Mart{\'\i}n, and Peter Stone.
\newblock Deep reinforcement learning for robotics: A survey of real-world
  successes.
\newblock In {\em Proceedings of the 39th AAAI Conference on Artificial
  Intelligence}, pages 28694--28698, 2025.

\bibitem{TD_Tesauro1995}
Gerald Tesauro.
\newblock Temporal difference learning and td-gammon.
\newblock {\em Commun. ACM}, 38(3):58--68, Mar 1995.

\bibitem{MuJoCo}
Emanuel Todorov, Tom Erez, and Yuval Tassa.
\newblock {MuJoCo}: {A} physics engine for model-based control.
\newblock In {\em 2012 IEEE/RSJ International Conference on Intelligent Robots
  and Systems}, pages 5026--5033, 2012.

\bibitem{Watkins1992}
C.~J. Watkins and P.~Dayan.
\newblock Q-learning.
\newblock {\em Machine learning}, 8(3):279--292, 1992.

\bibitem{QMD3_AAAI2022}
Wei Wei, Yujia Zhang, Jiye Liang, Lin Li, and Yyuze Li.
\newblock Controlling underestimation bias in reinforcement learning via
  quasi-median operation.
\newblock In {\em Proceedings of the AAAI Conference on Artificial
  Intelligence}, pages 8621--8628, 2022.

\bibitem{BOO_NEURIPS2022}
Chenyang Wu, Tianci Li, Zongzhang Zhang, and Yang Yu.
\newblock Bayesian optimistic optimization: Optimistic exploration for
  model-based reinforcement learning.
\newblock In {\em Advances in neural information processing systems}, pages
  14210--14223, 2022.

\bibitem{MEEE_ICRA2021}
Yao Yao, Li~Xiao, Zhicheng An, Wanpeng Zhang, and Dijun Luo.
\newblock Sample efficient reinforcement learning via model-ensemble
  exploration and exploitation.
\newblock In {\em 2021 IEEE International Conference on Robotics and Automation
  (ICRA)}, pages 4202--4208, 2021.

\end{thebibliography}
}

\appendix

\section{Experimental Settings in Detail\label{sec:Detailed-Experimental-Settings}}

This section gives detailed experimental settings adopted in this
study. The code along with the instructions containing the exact command
and environment needed to run to reproduce the results, and the followed
licenses are available at \url{https://github.com/AndyWu101/MAMC}.

\begin{table}[!t]
\caption{\label{tab:Hyperparameter-settings}Hyperparameter settings}

\centering{}%
\begin{tabular}{llccccc}
\toprule 
{\small Type} & {\small Hyperparameter} & {\small TD3-SMR} & {\small DARC-SMR} & {\small SAC-SMR} & {\small REDQ-SMR} & {\small MAMC}\tabularnewline
\midrule
{\small Shared} & {\small\#Actors ($N_{A}$)} & {\small 1} & {\small 2} & {\small 1} & {\small 1} & {\small 10}\tabularnewline
 & {\small\#Critics ($N_{C}$)} & {\small 2} & 2 & {\small 2} & {\small 10} & {\small 10}\tabularnewline
 & {\small Discount factor} & {\small 0.99} & {\small 0.99} & {\small 0.99} & {\small 0.99} & {\small 0.99}\tabularnewline
 & {\small Actor learning rate} & {\small 3.0E-4} & {\small 3.0E-4} & {\small 3.0E-4} & {\small 3.0E-4} & {\small 1.0E-4}{\small\tablefootnote{{\small Without delayed update and target actor, the MAMC adopts a
small learning rate.}}}\tabularnewline
 & {\small Critic learning rate} & {\small 3.0E-4} & {\small 3.0E-4} & {\small 3.0E-4} & {\small 3.0E-4} & {\small 3.0E-4}\tabularnewline
 & {\small Optimizer} & {\small Adam} & {\small Adam} & {\small Adam} & {\small Adam} & {\small Adam}\tabularnewline
 & {\small Batch size ($N_{\mathcal{B}}$)} & {\small 256} & {\small 256} & {\small 256} & {\small 256} & {\small 256}\tabularnewline
 & {\small Actor target} & {\small v} & {\small v} & {\small -} & {\small -} & {\small -}\tabularnewline
 & {\small Critic target} & {\small v} & {\small v} & {\small v} & {\small v} & {\small v}\tabularnewline
 & {\small Soft update ratio ($\tau$)} & {\small 5.0E-3} & {\small 5.0E-3} & {\small 5.0E-3} & {\small 5.0E-3} & {\small 5.0E-3}\tabularnewline
 & {\small SMR ratio ($M$)} & {\small 10} & {\small 10} & {\small 10} & {\small 10} & {\small 10}\tabularnewline
 & {\small Warm-up steps} & {\small 5k} & {\small 5k} & {\small 5k} & {\small 5k} & {\small 5k}\tabularnewline
 & {\small Delayed update ($d$)} & {\small 2} & {\small 1} & {\small 1} & {\small 10} & {\small 1}\tabularnewline
\midrule 
{\small Deterministic} & {\small Exploration noise} & {\small$\mathcal{N}(0,0.1)$} & {\small$\mathcal{N}(0,0.1)$} & {\small -} & {\small -} & {\small$\mathcal{N}(0,0.1)$}\tabularnewline
 & {\small Target policy noise} & {\small$\mathcal{N}(0,0.2)$} & {\small$\mathcal{N}(0,0.2)$} & {\small -} & {\small -} & {\small$\mathcal{N}(0,0.1)$}\tabularnewline
 & {\small Noise clip} & {\small$[-0.5,0.5]$} & {\small$[-0.5,0.5]$} & {\small -} & {\small -} & {\small -}\tabularnewline
\midrule 
{\small Stochastic} & {\small Temperature ($\alpha$)} & {\small -} & {\small -} & {\small Tuned}{\small\tablefootnote{{\small SAC set the $\alpha$ to 0.05 for Humanoid, and 0.2 for the
others.}}} & {\small Adaptive} & {\small -}\tabularnewline
 & {\small Log std. clip} & {\small -} & {\small -} & {\small$[-20,2]$} & {\small$[-20,2]$} & {\small -}\tabularnewline
\midrule 
{\small Specific} & {\small Weighting coef. ($\nu$)} & {\small -} & {\small Tuned}{\small\tablefootnote{{\small DARC set the $\nu$ to 0.15 for Hopper, 0.25 for Ant, and 0.1
for the others.}}} & {\small -} & {\small -} & {\small -}\tabularnewline
 & {\small Regularization ($\lambda$)} & {\small -} & {\small 5.0E-3} & {\small -} & {\small -} & {\small -}\tabularnewline
 & {\small Target entropy} & {\small -} & {\small -} & {\small -} & {\small Tuned}{\small\tablefootnote{{\small REDQ set target entropy to -1 for Hopper, -2 for Humanoid,
-3 for HalfCheetah and Walker, and -4 for Ant.}}} & {\small -}\tabularnewline
 & {\small Ensemble subset size} & {\small -} & {\small -} & {\small -} & {\small 2} & {\small -}\tabularnewline
 & {\small Quantile ($q$)} & {\small -} & {\small -} & {\small -} & {\small -} & {\small 0.2}\tabularnewline
\bottomrule
\end{tabular}
\end{table}

\subsection{Hyperparameter Settings\label{subsec:Hyperparameters}}

Table \ref{tab:Hyperparameter-settings} compiles the hyperparameter
settings for the three deterministic-policy-based (TD3-SMR, DARC-SMR,
and MAMC) and two stochastic-policy-based (SAC-SMR and REDQ-SMR) methods.
Most of the settings follow the original suggestions in the non-SMR
version. In the shared hyperparameters, the number of actors and critics
in the MAMC are both set to 10, which equals to the number of critics
in REDQ-SMR. In addition, the DARC-SMR, SAC-SMR, and MAMC have no
delayed update for each actor, whilst TD3-SMR and REDQ-SMR has a delayed
update of 2 and 10, respectively. Furthermore, SAC-SMR, REDQ-SMR,
and the MAMC do not consider the utilization of actor target when
calculating the TD target. Noteworthily, this study sets a low actor
learning rate for the MAMC since it has no delayed update and actor
target. All the test methods have an SMR ratio of 10. As REDQ-SMR
has considered SMR technique, its UTD ratio is set to 1 for a fair
comparison.

For hyperparameters considered in deterministic-policy-based methods,
the proposed MAMC adds noise to actors when exploration and calculation
of target values with the same distribution, while TD3-SMR and DARC-SMR
considered larger noise when computing the target values than exploration.
Also, the MAMC has no noise clip for simplicity. As for hyperparameters
leveraged in stochastic-policy-based methods, the SAC-SMR set a small
temperature for Humanoid, and a large one for the others, and the
REDQ-SMR considered an adaptive control of temperature.

Some hyperparameters are exploited in a specific method. DARC-SMR
fine-tuned weighting coefficient $\nu$ for different environment,
and considered a regularization coefficient for similarity of two
critics. REDQ-SMR also fine-tuned the target entropy for each environment,
and set the ensemble subset size to 2. For the MAMC, the number of
actors and critics are both set to 10, and the quantile parameter
$q$ is set to 0.2.

\subsection{System Configuration\label{subsec:System-Configuration}}

All the experiments are conducted on a server with Intel Xeon W7-2475X
CPU (with 2.6 GHz clock rate, 20 cores and 40 hyperthreads), two NVIDIA
RTX 4090 GPU cards (each with 24GB memory), and 128 GB main memory.

\subsection{MuJoCo\label{subsec:MuJoco}}

The properties of the selected environments in MuJoCo \cite{MuJoCo}
are listed as follows:
\begin{itemize}
\item Hopper-v5
\begin{itemize}
\item Appearance: 2D single-leg hopping robot
\begin{itemize}
\item Simulation: kangaroo hopping
\item State: 11-dimensional random vector $s\in\mathbb{R}^{11}$, includes
position and velocity information of various body parts
\item Action: 3-dimensional random vector $a\in[-1,1]^{3}$, corresponding
to torque control of three hinge joints
\end{itemize}
\item HalfCheetah-v5
\begin{itemize}
\item Appearance: 2D bipedal robot
\item Simulation: cheetah running
\item State: 17-dimensional random vector $s\in\mathbb{R}^{17}$, includes
joint angles, angular velocities, and body linear velocity
\item Action: 6-dimensional random vector $a\in[-1,1]^{6}$, corresponding
to torque control of six hinge joints
\end{itemize}
\item Walker2d-v5
\begin{itemize}
\item Appearance: 2D bipedal walking robot
\item Simulation: human walking
\item State: 17-dimensional random vector $s\in\mathbb{R}^{17}$, includes
position and velocity information of various body parts
\item Action: 6-dimensional random vector $a\in[-1,1]^{6}$, corresponding
to torque control of six hinge joints
\end{itemize}
\item Ant-v5
\begin{itemize}
\item Appearance: 3D quadrupedal robot
\item Simulation: ant walking
\item State: 105-dimensional random vector $s\in\mathbb{R}^{105}$, includes
position, velocity, and angle information of various body parts
\item Action: 8-dimensional random vector $a\in[-1,1]^{8}$, corresponding
to torque control of eight hinge joints
\end{itemize}
\item Humanoid-v5
\begin{itemize}
\item Appearance: 3D bipedal humanoid robot
\item Simulation: complex human-like locomotion and balancing
\item State: 348-dimensional random vector $s\in\mathbb{R}^{348}$, includes
joint angles, velocities, torso orientation, and center of mass information
\item Action: 17-dimensional random vector $a\in[-0.4,0.4]^{17}$, corresponding
to torque control of 17 motor joints
\end{itemize}
\end{itemize}

\section{Proof of Theorems\label{sec:Proof}}

\setcounter{theorem}{0}

\begin{theorem} The variance of target values obtained by multiple
actors are less than that using a single actor
\begin{equation}
\mathbb{V}[\hat{V}_{A}(s^{\prime};C^{\prime})]\leq\mathbb{V}[\hat{V}_{\phi}(s^{\prime};C^{\prime})]\,.
\end{equation}

\end{theorem}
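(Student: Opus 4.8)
The plan is to regard the per-actor estimates $\{\hat{V}_{\phi_i}(s';C')\}_{1\le i\le N_A}$ as $N_A$ independent and identically distributed random variables that all share the common law of the single-actor estimate $\hat{V}_{\phi}(s';C')$ appearing on the right-hand side, the randomness being supplied by the actor parameters $\phi_i$ together with the policy noise $\epsilon$. Under this modeling the left-hand quantity $\hat{V}_A(s';C')=\mathrm{Med}(\{\hat{V}_{\phi_i}(s';C')\}_{1\le i\le N_A})$ is exactly the sample median of an i.i.d.\ sample of size $N_A$, whereas the right-hand quantity is a single draw from the same population. The theorem therefore reduces to the classical statement that the sample median of an i.i.d.\ sample is no more dispersed, in variance, than one individual observation.

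First I would fix $s'$ and the target critics $C'$ and introduce the common cumulative distribution function $F$, density $f$, median $m$, and variance $\sigma^{2}=\mathbb{V}[\hat{V}_{\phi}(s';C')]$ of the per-actor estimates, and write the ordered values as $\hat{V}_{(1)}\le\cdots\le\hat{V}_{(N_A)}$. A convenient preliminary bound comes from the identity $\sum_i \hat{V}_{(i)}=\sum_i \hat{V}_{\phi_i}$: taking variances and using the nonnegative correlation of order statistics gives $\sum_i \mathbb{V}[\hat{V}_{(i)}]\le N_A\,\sigma^{2}$, so the order statistics have average variance at most $\sigma^{2}$. To isolate the central (median) order statistic I would then invoke its concentration about $m$: its asymptotic sampling variance is $1/(4 N_A f(m)^{2})$, which for the symmetric, approximately Gaussian estimates relevant here (the noise being $\mathcal{N}(0,\sigma)$) equals $\pi\sigma^{2}/(2N_A)$ and hence stays below $\sigma^{2}$ for every $N_A\ge 2$, yielding $\mathbb{V}[\hat{V}_A(s';C')]\le\mathbb{V}[\hat{V}_{\phi}(s';C')]$.

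The hard part will be justifying the median-versus-single-observation variance inequality without smuggling in an unwarranted distributional hypothesis, since for an arbitrary law the sample median need not be less variable than one sample. I would therefore make the regularity of the ensemble explicit: assuming the per-actor estimates are i.i.d.\ with a symmetric, unimodal (in particular approximately Gaussian) density around $m$, the median's sampling distribution concentrates around $m$ as $N_A$ grows, which closes the argument. The remaining pieces are routine once this assumption is in place, including the explicit small-sample base case $N_A=2$, where the median coincides with the two-point average $(\hat{V}_{\phi_1}+\hat{V}_{\phi_2})/2$ of variance $\sigma^{2}/2\le\sigma^{2}$, and the verification of the order-statistic variance decomposition used above.
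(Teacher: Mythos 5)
Your proposal is sound under its stated hypotheses, but it travels a genuinely different road than the paper. The paper's proof never treats the median as a median: it assumes the per-actor values are symmetric (``not skewed'') and on that basis \emph{identifies} $\mathrm{Med}(\{\hat{V}_{\phi_i}\})$ with the sample mean $N_A^{-1}\sum_i \hat{V}_{\phi_i}$, then uses (implicit) independence to write the variance of that mean as $N_A^{-2}\sum_i \mathbb{V}[\hat{V}_{\phi_i}]$, bounds it by $N_A^{-1}\mathbb{V}[\hat{V}_{\phi_{\max}}]$, and finally passes to $\mathbb{V}[\hat{V}_{\phi_{\min}}]\leq\mathbb{V}[\hat{V}_{\phi}]$ under an explicit side condition $\mathbb{V}_{\phi_{\max}}/\mathbb{V}_{\phi_{\min}}\leq\epsilon_A=N_A$ on the max-to-min variance ratio. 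You instead keep the median as a true order statistic, impose the stronger homogeneity assumption that the $\hat{V}_{\phi_i}$ are i.i.d.\ copies of the single-actor estimate, and invoke classical sampling theory for the median (asymptotic variance $1/(4N_A f(m)^2)$, i.e.\ $\pi\sigma^2/(2N_A)$ in the Gaussian case, plus the $N_A=2$ base case). What each buys: the paper's route is elementary and accommodates actors with heterogeneous variances, at the price of both the median-equals-mean identification (a leap, since even under symmetry the sample median and sample mean are different random variables) and the ratio constraint; your route is more faithful to what the estimator actually is, and your honesty that the claim is simply false for arbitrary laws is a real improvement, but it requires identical distributions and leans on an asymptotic formula that is only heuristic at finite $N_A$ (your preliminary order-statistic bound $\sum_i\mathbb{V}[\hat{V}_{(i)}]\leq N_A\sigma^2$ cannot rescue this by itself, since for e.g.\ the uniform law the central order statistic has the \emph{largest} variance among all order statistics, so an average bound does not localize to the median; you would need the finite-sample Gaussian/symmetric-unimodal result, which is classical but should be cited or proved). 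Note also a limitation both arguments share: independence of $\hat{V}_{\phi_i}(s';C')$ across $i$ is dubious, since all actors are evaluated at the same random state $s'$ through the same target critics $C'$.
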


\begin{proof} Assume that the distribution of $\{\hat{V}_{\phi_{i}}(s^{\prime};C^{\prime})\}_{1\leq i\leq N_{A}}$
are not skewed (symmetric), we have:

\begin{align}
\mathbb{V}_{s^{\prime}\sim S}[\hat{V}_{A}(s^{\prime};C^{\prime})] & =\mathbb{V}[\mathrm{Med}(\{\hat{V}_{\phi_{i}}(s^{\prime};C^{\prime})\}_{1\leq i\leq N_{A}})]\nonumber \\
 & =\mathbb{V}[\mathbb{E}_{\phi_{i}\in A}[\hat{V}_{\phi_{i}}(s^{\prime};C^{\prime})]]\nonumber \\
 & =\mathbb{V}[N_{A}^{-1}{\textstyle \sum_{\phi_{i}\in A}}\hat{V}_{\phi_{i}}(s^{\prime};C^{\prime})]\nonumber \\
 & =N_{A}^{-2}{\textstyle \sum_{\phi_{i}\in A}}\mathbb{V}[\hat{V}_{\phi_{i}}(s^{\prime};C^{\prime})]\nonumber \\
 & \leq N_{A}^{-1}\mathbb{V}[\hat{V}_{\phi_{\max}}(s^{\prime};C^{\prime})]\nonumber \\
 & \leq\mathbb{V}[\hat{V}_{\phi_{\min}}(s^{\prime};C^{\prime})]\nonumber \\
 & \leq\mathbb{V}[\hat{V}_{\phi}(s^{\prime};C^{\prime})]\nonumber \\
 & \leq\mathbb{V}[\hat{V}_{\phi_{\max}}(s^{\prime};C^{\prime})]\,.\label{eq:proof-v-A}
\end{align}

The inequality is always satisfied comparing to $\phi=\phi_{\max}$.
For generalization to any arbitrary $\phi\geq\phi_{\min}$, the ratio
of maximum to minimum variance are within some bound
\begin{equation}
\nicefrac{\mathbb{V}_{\phi_{\max}}}{\mathbb{V}_{\phi_{\min}}}\leq\epsilon_{A}\,,
\end{equation}
 where $\epsilon_{A}=N_{A}$ serves as a constraint. Also, it is apparent
that the larger the $N_{A}$ the easier the satisfaction of the constraint
on the ratio.

\end{proof}

\begin{theorem} The variance of target values obtained by multiple
critics are less than using a single critic
\begin{equation}
\mathbb{V}[\hat{V}_{\phi}(s^{\prime};C^{\prime})]\leq\mathbb{V}[\hat{V}_{\phi}(s^{\prime};\theta^{\prime})]\,.
\end{equation}

\end{theorem}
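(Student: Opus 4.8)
The plan is to mirror the proof of Theorem~1 line for line, interchanging the roles of actors and critics. Where Theorem~1 aggregates over the $N_A$ actors with the outer median $\mathrm{Med}$ and compares against the single-actor target $\hat{V}_{\phi}(s^{\prime};C^{\prime})$, here I aggregate over the $N_C$ critics with the $q$-quantile $\mathrm{Quantile}_{q}$ and compare against the single-critic target $\hat{V}_{\phi}(s^{\prime};\theta^{\prime})=Q_{\theta^{\prime}}(s^{\prime},\pi_{\phi}(s^{\prime})+\epsilon)$. First I would unfold the definition in Eq.~(\ref{eq:estimation-target}), writing the left-hand quantity as $\hat{V}_{\phi}(s^{\prime};C^{\prime})=\mathrm{Quantile}_{q}(\{Q_{\theta_{j}^{\prime}}(s^{\prime},\pi_{\phi}(s^{\prime})+\epsilon)\}_{1\leq j\leq N_{C}})$, so that, for a fixed actor $\phi$, the object whose variance over $s^{\prime}\sim S$ we wish to bound is an order-statistic aggregate of the $N_C$ target-critic outputs.

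Next, invoking the same symmetry hypothesis as in Theorem~1---that the collection $\{Q_{\theta_{j}^{\prime}}(s^{\prime},\pi_{\phi}(s^{\prime})+\epsilon)\}_{1\leq j\leq N_{C}}$ is symmetric across critics---I would replace the quantile by the critic-ensemble mean, $\mathrm{Quantile}_{q}(\{Q_{\theta_{j}^{\prime}}\}_{j})=\mathbb{E}_{\theta_{j}^{\prime}}[Q_{\theta_{j}^{\prime}}]=N_{C}^{-1}\sum_{j}Q_{\theta_{j}^{\prime}}$. Treating the critics as approximately independent then yields the variance-of-an-average identity $\mathbb{V}[N_{C}^{-1}\sum_{j}Q_{\theta_{j}^{\prime}}]=N_{C}^{-2}\sum_{j}\mathbb{V}[Q_{\theta_{j}^{\prime}}]$, which I would upper-bound by $N_{C}^{-1}\mathbb{V}[Q_{\theta_{\max}^{\prime}}]$, where $\theta_{\max}^{\prime}$ denotes the critic of largest individual variance.

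To close the chain I would introduce, exactly as for the actors, the ratio constraint $\mathbb{V}_{\theta_{\max}}/\mathbb{V}_{\theta_{\min}}\leq\epsilon_{C}$ with $\epsilon_{C}=N_{C}$. This gives $N_{C}^{-1}\mathbb{V}[Q_{\theta_{\max}^{\prime}}]\leq\mathbb{V}[Q_{\theta_{\min}^{\prime}}]\leq\mathbb{V}[Q_{\theta^{\prime}}]$ for an arbitrary single critic $\theta^{\prime}$, and hence $\mathbb{V}[\hat{V}_{\phi}(s^{\prime};C^{\prime})]\leq\mathbb{V}[\hat{V}_{\phi}(s^{\prime};\theta^{\prime})]$, as required; as before, a larger $N_{C}$ makes the constraint easier to satisfy.

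The step I expect to be the main obstacle is the one that licenses ``quantile $=$ ensemble mean.'' For the median ($q=0.5$) this identity is exact under the symmetry hypothesis, but the method actually uses $q=0.2$, for which the quantile is \emph{not} the mean; the honest justification is that the sample $q$-quantile of $N_{C}$ draws still concentrates around its population value with variance of order $q(1-q)/(N_{C}f^{2})$, i.e.\ $O(1/N_{C})$, so the $1/N_{C}$ variance-reduction conclusion survives even though the clean equality does not. A secondary delicacy, inherited from Theorem~1, is the independence-across-critics assumption behind the variance-of-sum decomposition, which is only approximate since all critics are trained from overlapping replay data.
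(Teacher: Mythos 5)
Your proposal follows the paper's proof almost line for line: unfold the quantile over target critics, convert it to the ensemble mean, apply the variance-of-an-average identity under independence across critics, bound the sum by the largest single-critic variance, and close with the ratio constraint $\mathbb{V}_{\theta_{\max}^{\prime}}/\mathbb{V}_{\theta_{\min}^{\prime}}\leq\epsilon_{C}$. The one genuine divergence is precisely the step you flag as the main obstacle. The paper does \emph{not} reuse the symmetry hypothesis of Theorem~1 here; instead it assumes the $q$-th quantile is \emph{proportional} to the ensemble mean,
\begin{equation}
\mathrm{Quantile}_{q}\bigl(\{Q_{\theta_{j}^{\prime}}(s^{\prime},\pi_{\phi}(s^{\prime}))\}_{1\leq j\leq N_{C}}\bigr)=c_{q}\,\mathbb{E}_{\theta^{\prime}\in C^{\prime}}[Q_{\theta^{\prime}}(s^{\prime},\pi_{\phi}(s^{\prime}))]\quad\text{for some }c_{q}\in\mathbb{R}\,,
\end{equation}
which sidesteps the ``quantile $=$ mean'' identity entirely and works for any $q$, not just the median. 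The constant $c_{q}$ is then carried through the variance computation (picking up a factor $c_{q}^{2}$) and lands in the final constraint, which in the paper reads $\epsilon_{C}=c_{q}^{-2}N_{C}$ rather than your $\epsilon_{C}=N_{C}$; your version is exactly the special case $c_{q}=1$. What each route buys: the paper's proportionality assumption keeps the argument self-contained and makes the admissible variance ratio explicitly $q$-dependent (a more pessimistic quantile with $|c_{q}|$ further from 1 tightens or loosens the constraint), whereas your asymptotic patch---sample-quantile concentration with variance of order $q(1-q)/(N_{C}f^{2})$---is statistically better grounded but lives outside the paper's deterministic-inequality framework and never produces the clean constraint constant. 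Since the theorem as stated in the paper is really ``the inequality holds \emph{subject to} the ratio constraint,'' losing the $c_{q}^{-2}$ factor means your constraint is stated for the wrong threshold when $q\neq0.5$; otherwise the argument is the same.
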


\begin{proof} Assume that the $q$-th quantile among critic targets
$C^{\prime}$ is $c_{q}$ times their expectation:
\begin{gather}
\begin{aligned}\hat{V}_{\phi}(s^{\prime};C^{\prime}) & =\mathrm{Quantile}_{q}(\{Q_{\theta_{j}^{\prime}}(s^{\prime},\pi_{\phi}(s^{\prime}))\}_{1\leq j\leq N_{C}})\\
 & =c_{q}\mathbb{E}_{\theta^{\prime}\in C^{\prime}}[Q_{\theta^{\prime}}(s^{\prime},\pi_{\phi}(s^{\prime}))]\;\exists\;c_{q}\in\mathbb{R}
\end{aligned}
\,,\label{eq:assumption-quantile}
\end{gather}
 and thus the following equation proves the theorem:

\begin{align}
\mathbb{V}_{s^{\prime}\sim S}[\hat{V}_{\phi}(s^{\prime};C^{\prime})] & =\mathbb{V}[c_{q}\mathbb{E}_{\theta^{\prime}\in C^{\prime}}[Q_{\theta^{\prime}}(s^{\prime},\pi_{\phi}(s^{\prime}))]]\nonumber \\
 & =c_{q}^{2}\mathbb{V}[N_{C}^{-1}{\textstyle \sum_{\theta^{\prime}\in C^{\prime}}}Q_{\theta^{\prime}}(s^{\prime},\pi_{\phi}(s^{\prime}))]\nonumber \\
 & =c_{q}^{2}N_{C}^{-2}{\textstyle \sum_{\theta^{\prime}\in C^{\prime}}}\mathbb{V}[Q_{\theta^{\prime}}(s^{\prime},\pi_{\phi}(s^{\prime}))]\nonumber \\
 & \leq c_{q}^{2}N_{C}^{-1}\mathbb{V}[Q_{\theta_{\max}^{\prime}}(s^{\prime},\pi_{\phi}(s^{\prime}))]\nonumber \\
 & \leq\mathbb{V}[Q_{\theta_{\min}^{\prime}}(s^{\prime},\pi_{\phi}(s^{\prime}))]\nonumber \\
 & =\mathbb{V}[\hat{V}_{\phi}(s^{\prime};\theta_{\min}^{\prime})]\nonumber \\
 & \leq\mathbb{V}[\hat{V}_{\phi}(s^{\prime};\theta^{\prime})]\nonumber \\
 & \leq\mathbb{V}[\hat{V}_{\phi}(s^{\prime};\theta_{\max}^{\prime})]\,.\label{eq:proof-v-c}
\end{align}
This theorem holds when the ratio of maximum to minimum variance are
within some bound
\begin{equation}
\nicefrac{\mathbb{V}_{\theta_{\max}^{\prime}}}{\mathbb{V}_{\theta_{\min}^{\prime}}}\leq\epsilon_{C}\,,
\end{equation}
 subject to
\begin{equation}
\epsilon_{C}=c_{q}^{-2}N_{C}\,.
\end{equation}
 The bound $\epsilon_{C}$ can be viewed as a constraint of SAMC to
be more stable than SASC. From the above equation, it is obvious that
the intensity of the constraint is proportional to the coefficient
$c_{q}$ and is inverse proportional to the number of critics.

\end{proof}

For proving the next theorems, this study first introduces two lemmas.

\begin{lemma} The target values among multiple actors are in between
the minimum and maximum of target values for a single actor

\begin{equation}
\mathbb{E}[\hat{V}_{\phi_{\min}}(s^{\prime};C)]\leq\mathbb{E}[\hat{V}_{A}(s^{\prime};C)]\leq\mathbb{E}[\hat{V}_{\phi_{\max}}(s^{\prime};C)]\,.
\end{equation}

\end{lemma}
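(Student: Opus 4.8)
The plan is to reduce the lemma to the elementary order-statistic fact that the median of a finite collection of reals always lies between its minimum and maximum, and then to push this pointwise inequality through the expectation by monotonicity. Recall that $\hat{V}_A(s^{\prime};C)$ is the median, taken over the $N_A$ actors, of the per-actor ensemble estimates $\hat{V}_{\phi_i}(s^{\prime};C)$ of Eq.~(\ref{eq:ensemble-estimation-v}), in direct analogy with the target-critic version in Eq.~(\ref{eq:estimation-target}). I would read $\hat{V}_{\phi_{\min}}$ and $\hat{V}_{\phi_{\max}}$ as the per-state lower and upper envelopes over actors, so that $\hat{V}_{\phi_{\min}}(s^{\prime};C)=\min_{i}\hat{V}_{\phi_i}(s^{\prime};C)$ and $\hat{V}_{\phi_{\max}}(s^{\prime};C)=\max_{i}\hat{V}_{\phi_i}(s^{\prime};C)$.

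The first step is the pointwise sandwich. By definition the median is an order statistic with at least half of the sampled values on either side, hence it never falls below the minimum nor above the maximum of the collection. Applied to $\{\hat{V}_{\phi_i}(s^{\prime};C)\}_{1\leq i\leq N_A}$ this gives, for every fixed next state $s^{\prime}$,
\begin{equation}
\hat{V}_{\phi_{\min}}(s^{\prime};C)\leq\hat{V}_A(s^{\prime};C)\leq\hat{V}_{\phi_{\max}}(s^{\prime};C)\,.
\end{equation}
The second step takes the expectation over $s^{\prime}\sim S$. Since expectation is monotone, i.e.\ $X\leq Y$ pointwise implies $\mathbb{E}[X]\leq\mathbb{E}[Y]$, the displayed inequality transfers verbatim to
\begin{equation}
\mathbb{E}[\hat{V}_{\phi_{\min}}(s^{\prime};C)]\leq\mathbb{E}[\hat{V}_A(s^{\prime};C)]\leq\mathbb{E}[\hat{V}_{\phi_{\max}}(s^{\prime};C)]\,,
\end{equation}
which is exactly the claimed bound and completes the argument.

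There is no substantive analytic obstacle here; the only point demanding care is the precise meaning of $\phi_{\min}$ and $\phi_{\max}$, and this is where I would be most careful. The clean argument above uses the pointwise lower and upper envelopes over actors, under which monotonicity of expectation applies immediately. Were one instead to fix $\phi_{\min}$ and $\phi_{\max}$ as single actors (for instance those of smallest and largest expected estimate, matching the notational style of Theorems~1 and~2), one would have to verify that the expectation of the median is still trapped between the two fixed quantities, which does not follow from monotonicity alone. I would therefore adopt the pointwise-envelope reading explicitly, both because it makes the proof immediate and because it is what lets the bound feed cleanly into the subsequent theorems on the estimation error $\mathcal{E}_{A,C}$.
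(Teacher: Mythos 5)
Your proposal is correct and follows essentially the same route as the paper, whose entire proof consists of asserting the pointwise inequality $\hat{V}_{\phi_{\min}}(s^{\prime};C)\leq\hat{V}_{A}(s^{\prime};C)\leq\hat{V}_{\phi_{\max}}(s^{\prime};C)$ and letting the expectation bound follow implicitly by monotonicity. Your additional care in pinning down $\phi_{\min}$ and $\phi_{\max}$ as per-state envelopes (rather than fixed actors) makes explicit a point the paper glosses over, but it is a refinement of, not a departure from, the paper's argument.
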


\begin{proof} The lemma holds owing to the following inequality:

\begin{equation}
\hat{V}_{\phi_{\min}}(s^{\prime};C)\leq\hat{V}_{A}(s^{\prime};C)\leq\hat{V}_{\phi_{\max}}(s^{\prime};C)\,.
\end{equation}

\end{proof}

\begin{lemma} The target values among multiple critics are in between
the minimum and maximum of target values for a single critic

\begin{equation}
\mathbb{E}[\hat{V}_{A}(s^{\prime};\theta_{\min})]\leq\mathbb{E}[\hat{V}_{A}(s^{\prime};C)]\leq\mathbb{E}[\hat{V}_{A}(s^{\prime};\theta_{\max})]\,.
\end{equation}

\end{lemma}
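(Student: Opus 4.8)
The plan is to mirror the proof of Lemma 1 exactly, only now fixing the actor family $A$ and the $q$th-quantile critic-aggregation while varying the single-critic surrogates $\theta_{\min}$ and $\theta_{\max}$. The whole argument rests on a pointwise-in-$s'$ inequality, so I would never need the expectation until the very last line. First I would work at the level of a single actor. By the definition in Eq.~(\ref{eq:estimation-target}), for each actor $\phi_i$ the quantity $\hat{V}_{\phi_i}(s';C)=\mathrm{Quantile}_q(\{Q_{\theta_j}(s',\pi_{\phi_i}(s'))\}_{1\le j\le N_C})$ is a quantile of a finite set, and any quantile of a finite set lies between its minimum and its maximum. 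Hence $\hat{V}_{\phi_i}(s';\theta_{\min})\le\hat{V}_{\phi_i}(s';C)\le\hat{V}_{\phi_i}(s';\theta_{\max})$ for every $s'$, where the two flanking terms are exactly the single-critic estimates evaluated with the minimizing and maximizing critic.

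Second, I would lift this actor-wise chain through the outer median. Since $\hat{V}_A(s';\cdot)=\mathrm{Med}(\{\hat{V}_{\phi_i}(s';\cdot)\}_{1\le i\le N_A})$ and the median is monotone under the pointwise order — if $a_i\le b_i$ for all $i$ then the $k$th order statistic of $\{a_i\}$ does not exceed that of $\{b_i\}$, so $\mathrm{Med}(\{a_i\})\le\mathrm{Med}(\{b_i\})$ — the inequality survives the median, giving $\hat{V}_A(s';\theta_{\min})\le\hat{V}_A(s';C)\le\hat{V}_A(s';\theta_{\max})$ pointwise. Taking the expectation over $s'\sim S$ and using monotonicity of expectation then yields $\mathbb{E}[\hat{V}_A(s';\theta_{\min})]\le\mathbb{E}[\hat{V}_A(s';C)]\le\mathbb{E}[\hat{V}_A(s';\theta_{\max})]$, which is the claim.

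The only non-cosmetic step — and the one the Lemma 1 proof silently used — is the monotonicity of the median under the pointwise order, since the median is not a linear functional. I would make it rigorous by the order-statistic observation above: sort each family and note that the sorted $a$-values are dominated term-by-term by the sorted $b$-values, so in particular the central (median) entries obey the inequality. The remaining ingredients (a finite-set quantile lies between the extremes, and $\mathbb{E}$ preserves order) are immediate, so no genuine obstacle remains once the median-monotonicity is stated cleanly.
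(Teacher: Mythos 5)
Your proof is correct and takes essentially the same route as the paper: the paper's proof of this lemma simply asserts the pointwise inequality $\hat{V}_{A}(s^{\prime};\theta_{\min})\leq\hat{V}_{A}(s^{\prime};C)\leq\hat{V}_{A}(s^{\prime};\theta_{\max})$ and (implicitly) takes expectations, which is exactly your chain. The only difference is that you supply the justification the paper leaves unstated --- a finite-set quantile lies between its extremes, and the median is monotone under pointwise domination via order statistics --- so your write-up is a fleshed-out version of the same argument, not a different one.
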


\begin{proof} Similarly, the inequality holds with

\begin{equation}
\hat{V}_{A}(s^{\prime};\theta_{\min})\leq\hat{V}_{A}(s^{\prime};C)\leq\hat{V}_{A}(s^{\prime};\theta_{\max})\,.
\end{equation}

\end{proof}

\begin{theorem} The estimation error of MAMC is between the estimation
error of multiple actors with minimum and maximum critics
\begin{equation}
\mathcal{E}_{A,Q_{\theta_{\min}}}\leq\mathcal{E}_{A,C}\leq\mathcal{E}_{A,Q_{\theta_{\max}}}\,.
\end{equation}

\end{theorem}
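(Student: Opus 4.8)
The plan is to reduce the claim to the already-established Lemma~2 by observing that the three estimation errors share a common baseline. First I would expand each quantity through the definition $\mathcal{E}_{A,X}=\mathbb{E}[\hat{V}_{A}(s';X)]-\mathbb{E}[\hat{V}_{\phi^{*}}(s';X)]$ for the three critic specifications $X\in\{Q_{\theta_{\min}},C,Q_{\theta_{\max}}\}$. The decisive observation is that the reference term $\mathbb{E}[\hat{V}_{\phi^{*}}(s';\cdot)]$, the expected value attained under the optimal policy $\phi^{*}$, acts as a fixed baseline across all three errors, so that the ordering among $\mathcal{E}_{A,Q_{\theta_{\min}}}$, $\mathcal{E}_{A,C}$, and $\mathcal{E}_{A,Q_{\theta_{\max}}}$ is controlled solely by the multi-actor term $\mathbb{E}[\hat{V}_{A}(s';\cdot)]$.

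Next I would invoke Lemma~2, which already supplies the sandwich
\begin{equation*}
\mathbb{E}[\hat{V}_{A}(s';\theta_{\min})]\leq\mathbb{E}[\hat{V}_{A}(s';C)]\leq\mathbb{E}[\hat{V}_{A}(s';\theta_{\max})].
\end{equation*}
This rests on the elementary fact that the $q$th-quantile over the critic ensemble always lies between the smallest and the largest critic output, a pointwise bound that survives both the (monotone) median over actors and the (monotone) expectation over states. Subtracting the common reference term $\mathbb{E}[\hat{V}_{\phi^{*}}(s';\cdot)]$ from every member of this chain preserves the two inequalities and delivers $\mathcal{E}_{A,Q_{\theta_{\min}}}\leq\mathcal{E}_{A,C}\leq\mathcal{E}_{A,Q_{\theta_{\max}}}$, which is the assertion.

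The hard part will be justifying that the baseline is genuinely common. If one insists on reading $\hat{V}_{\phi^{*}}(s';C)$ as depending on the critic set, then replacing $C$ by $\theta_{\min}$ or $\theta_{\max}$ perturbs the reference term as well, and the clean cancellation collapses: one would instead have to show that the gap $\mathbb{E}[\hat{V}_{A}(s';\theta_{\max})]-\mathbb{E}[\hat{V}_{A}(s';C)]$ dominates the corresponding gap for $\phi^{*}$, which fails without further assumptions on how the ensemble spread compares between $A$ and $\phi^{*}$. I would therefore pin down the interpretation that $\mathbb{E}[\hat{V}_{\phi^{*}}(s';\cdot)]$ denotes the true optimal-policy value serving as a constant estimation-bias baseline, consistent with the bias definition of \cite{DARC_AAAI2022}; this keeps the argument a one-line consequence of Lemma~2, matching the style of the preceding proofs.
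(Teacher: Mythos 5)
Your proposal matches the paper's proof essentially step for step: the paper likewise expands each error, invokes Lemma~2 for the sandwich $\mathbb{E}[\hat{V}_{A}(s^{\prime};\theta_{\min})]\leq\mathbb{E}[\hat{V}_{A}(s^{\prime};C)]\leq\mathbb{E}[\hat{V}_{A}(s^{\prime};\theta_{\max})]$, and subtracts a common baseline, which it writes as $\mathbb{E}[V_{\phi^{*}}(s^{\prime})]$ with no critic dependence --- exactly the interpretation you pin down. The subtlety you flag (the definition's $\hat{V}_{\phi^{*}}(s^{\prime};C)$ formally depends on $C$) is genuine, and the paper resolves it the same way you do, by silently treating the optimal-policy value as a fixed reference term.
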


\begin{proof} The proof is similar to the one given in \cite{DARC_AAAI2022}:

\begin{align}
\mathcal{E}_{A,Q_{\theta_{\min}}} & =\mathbb{E}[\hat{V}_{A}(s^{\prime};\theta_{\min})]-\mathbb{E}[V_{\phi^{*}}(s^{\prime})]\nonumber \\
 & \leq\mathbb{E}[\hat{V}_{A}(s^{\prime};C)]-\mathbb{E}[V_{\phi^{*}}(s^{\prime})]\nonumber \\
 & =\mathcal{E}_{A,C}\nonumber \\
 & \leq\mathbb{E}[\hat{V}_{A}(s^{\prime};\theta_{\max})]-\mathbb{E}[V_{\phi^{*}}(s^{\prime})]\nonumber \\
 & =\mathcal{E}_{A,Q_{\theta_{\max}}}\,.\label{eq:proof-e-a}
\end{align}

\end{proof}

\begin{theorem} The estimation error of MAMC is between the estimation
error of multiple critics with minimum and maximum actors
\begin{equation}
\mathcal{E}_{\pi_{\phi_{\min}},C}\leq\mathcal{E}_{A,C}\leq\mathcal{E}_{\pi_{\phi_{\max}},C}\,.
\end{equation}

\end{theorem}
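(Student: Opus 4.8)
The plan is to mirror the argument used for the preceding theorem, but to pivot on Lemma 1 rather than Lemma 2, exploiting that all three estimation errors in the claim share a single baseline term depending only on the fixed optimal policy $\phi^{*}$ and the fixed critic ensemble $C$.

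First I would write out the definition of the estimation error in each of the three cases:
\begin{align*}
\mathcal{E}_{\pi_{\phi_{\min}},C} &= \mathbb{E}[\hat{V}_{\phi_{\min}}(s^{\prime};C)] - \mathbb{E}[\hat{V}_{\phi^{*}}(s^{\prime};C)]\,,\\
\mathcal{E}_{A,C} &= \mathbb{E}[\hat{V}_{A}(s^{\prime};C)] - \mathbb{E}[\hat{V}_{\phi^{*}}(s^{\prime};C)]\,,\\
\mathcal{E}_{\pi_{\phi_{\max}},C} &= \mathbb{E}[\hat{V}_{\phi_{\max}}(s^{\prime};C)] - \mathbb{E}[\hat{V}_{\phi^{*}}(s^{\prime};C)]\,.
\end{align*}
The crucial observation is that the subtracted term $\mathbb{E}[\hat{V}_{\phi^{*}}(s^{\prime};C)]$ is identical across all three expressions, since neither $\phi^{*}$ nor the critic set $C$ changes from one case to the next.

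Next I would invoke Lemma 1, which already orders the expected target values over the actor ensemble:
\[
\mathbb{E}[\hat{V}_{\phi_{\min}}(s^{\prime};C)] \leq \mathbb{E}[\hat{V}_{A}(s^{\prime};C)] \leq \mathbb{E}[\hat{V}_{\phi_{\max}}(s^{\prime};C)]\,.
\]
Because subtracting a common constant from every member of a chain of inequalities preserves the ordering, I would subtract $\mathbb{E}[\hat{V}_{\phi^{*}}(s^{\prime};C)]$ throughout and then read each resulting difference back off as the corresponding estimation error, immediately producing $\mathcal{E}_{\pi_{\phi_{\min}},C} \leq \mathcal{E}_{A,C} \leq \mathcal{E}_{\pi_{\phi_{\max}},C}$. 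This is exactly the symmetric counterpart of the previous theorem, with the roles of actors and critics interchanged and Lemma 1 standing in for Lemma 2.

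There is essentially no analytical obstacle, as the entire substance has been delegated to Lemma 1 and the remaining step is mere monotonicity of translation. The only point that warrants a line of care is confirming that the baseline really is constant across the three cases, i.e.\ that $\phi^{*}$ and $C$ are genuinely held fixed while only the evaluated actor varies among $\phi_{\min}$, the ensemble $A$, and $\phi_{\max}$; once this is checked, the common term cancels cleanly and the inequality transfers verbatim.
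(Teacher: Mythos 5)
Your proposal is correct and is essentially identical to the paper's own proof: both expand the three estimation errors against the common baseline $\mathbb{E}[\hat{V}_{\phi^{*}}(s^{\prime};C)]$, apply Lemma 1's ordering $\mathbb{E}[\hat{V}_{\phi_{\min}}(s^{\prime};C)]\leq\mathbb{E}[\hat{V}_{A}(s^{\prime};C)]\leq\mathbb{E}[\hat{V}_{\phi_{\max}}(s^{\prime};C)]$, and conclude by subtracting the common constant. The paper presents this as a single chain of (in)equalities mirroring the preceding theorem, which is exactly the structure you describe.
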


\begin{proof} Similar derivation can be applied:

\begin{align}
\mathcal{E}_{\pi_{\phi_{\min}},C} & =\mathbb{E}[\hat{V}_{\phi_{\min}}(s^{\prime};C)]-\mathbb{E}[V_{\phi^{*}}(s^{\prime})]\nonumber \\
 & \leq\mathbb{E}[\hat{V}_{A}(s^{\prime};C)]-\mathbb{E}[V_{\phi^{*}}(s^{\prime})]\nonumber \\
 & =\mathcal{E}_{A,C}\nonumber \\
 & \leq\mathbb{E}[\hat{V}_{\phi_{\max}}(s^{\prime};C)]-\mathbb{E}[V_{\phi^{*}}(s^{\prime})]\nonumber \\
 & =\mathcal{E}_{\pi_{\phi_{\max}},C}\,,\label{eq:proof-e-c}
\end{align}
 and the theorem is proved.

\end{proof}

\begin{table}[!t]
\caption{\label{tab:Wilcoxon-signed-rank-quality-S}Wilcoxon signed rank test
for TD3 and DARC compared with the MAMC at early (100k), middle (200k),
and late stage (300k). The win/tie/lose denotes the number of environments
that the MAMC is significantly superior (+), equal (\textasciitilde ),
and inferior (-) to a corresponding test method.}

\centering{}%
\begin{tabular}{llrrrr}
\toprule 
Stage & $p$-value & \multicolumn{1}{c}{TD3-SMR} & \multicolumn{1}{c}{DARC-SMR} & \multicolumn{1}{c}{SAC-SMR} & \multicolumn{1}{c}{REDQ-SMR}\tabularnewline
\midrule 
100k & Hopper-v5 & 5.27E-02 (\textasciitilde ) & 2.44E-02 ($-$) & 1.61E-01 (\textasciitilde ) & 2.44E-02 (-)\tabularnewline
 & HalfCheetah-v5 & 1.38E-01 (\textasciitilde ) & 6.88E-01 (\textasciitilde ) & 6.15E-01 (\textasciitilde ) & 4.61E-01 (\textasciitilde )\tabularnewline
 & Walker2d-v5 & 4.88E-03 (+) & 3.22E-02 (+) & 1.37E-02 (+) & 3.85E-01 (\textasciitilde )\tabularnewline
 & Ant-v5 & 9.77E-04 (+) & 9.77E-04 (+) & 2.93E-03 (+) & 4.23E-01 (\textasciitilde )\tabularnewline
 & Humanoid-v5 & 9.77E-03 (+) & 9.67E-02 (\textasciitilde ) & 2.44E-02 (+) & 5.00E-01 (\textasciitilde )\tabularnewline
\midrule 
\multicolumn{2}{l}{Summary (win/tie/lose)} & 3/2/0 & 2/2/1 & 3/2/0 & 0/4/1\tabularnewline
\midrule
200k & Hopper-v5 & 6.54E-02 (\textasciitilde ) & 9.67E-02 (\textasciitilde ) & 1.86E-02 (-) & 9.77E-04 (-)\tabularnewline
 & HalfCheetah-v5 & 5.77E-01 (\textasciitilde ) & 1.88E-01 (\textasciitilde ) & 5.27E-02 (\textasciitilde ) & 5.27E-02 (\textasciitilde )\tabularnewline
 & Walker2d-v5 & 3.48E-01 (\textasciitilde ) & 2.78E-01 (\textasciitilde ) & 9.67E-02 (\textasciitilde ) & 2.46E-01 (\textasciitilde )\tabularnewline
 & Ant-v5 & 9.77E-04 (+) & 9.77E-04 (+) & 1.95E-03 (+) & 9.67E-02 (\textasciitilde )\tabularnewline
 & Humanoid-v5 & 4.88E-03 (+) & 1.88E-01 (\textasciitilde ) & 4.20E-02 (+) & 5.00E-01 (\textasciitilde )\tabularnewline
\midrule 
\multicolumn{2}{l}{Summary (win/tie/lose)} & 2/3/0 & 1/4/0 & 2/2/1 & 0/4/1\tabularnewline
\midrule
300k & Hopper-v5 & 1.38E-01 (\textasciitilde ) & 5.39E-01 (\textasciitilde ) & 2.78E-01 (\textasciitilde ) & 8.01E-02 (\textasciitilde )\tabularnewline
 & HalfCheetah-v5 & 6.88E-01 (\textasciitilde ) & 1.38E-01 (\textasciitilde ) & 5.27E-02 (\textasciitilde ) & 9.77E-03 (-)\tabularnewline
 & Walker2d-v5 & 3.13E-01 (\textasciitilde ) & 4.20E-02 (-) & 1.61E-01 (\textasciitilde ) & 2.46E-01 (\textasciitilde )\tabularnewline
 & Ant-v5 & 9.77E-04 (+) & 1.95E-03 (+) & 1.95E-03 (+) & 8.01E-02 (\textasciitilde )\tabularnewline
 & Humanoid-v5 & 1.37E-02 (+) & 5.77E-01 (\textasciitilde ) & 4.61E-01 (\textasciitilde ) & 4.20E-02 (-)\tabularnewline
\midrule 
\multicolumn{2}{l}{Summary (win/tie/lose)} & 2/3/0 & 1/3/1 & 1/4/0 & 0/3/2\tabularnewline
\bottomrule
\end{tabular}
\end{table}

\section{Additional Experimental Results}

Additional experimental results and further analysis are given in
the following subsections. 

\subsection{Statistical Analysis\label{subsec:Mean-and-stdandard}}

Table \ref{tab:Wilcoxon-signed-rank-quality-S} compiles the Wilcoxon
signed rank test for TD3 and DARC compared with the MAMC at early
(100k), middle (200k), and late stage (300k). The win/tie/lose denotes
the number of environments that the MAMC is significantly superior
(+), equal (\textasciitilde ), and inferior (-) to a corresponding
test method. The MAMC betters TD3-SMR, DARC-SMR, and SAC-SMR at all
three stage. In addition, the MAMC is comparable to REDQ-SMR at early
and middle stages, yet is inferior to the REDQ-SMR at late stage.

\begin{figure}[!b]
\includegraphics[width=0.48\columnwidth]{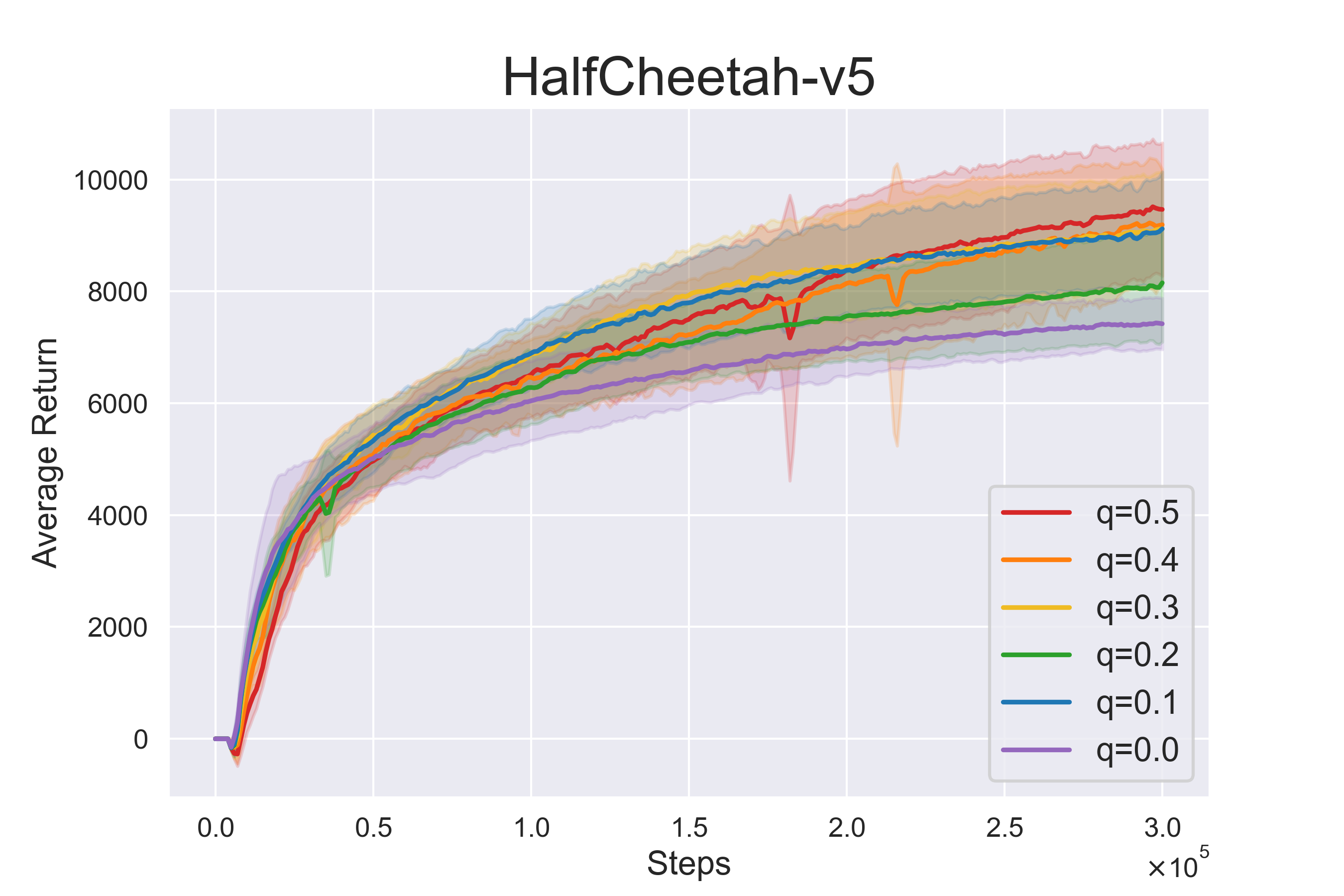}~~~\includegraphics[width=0.48\columnwidth]{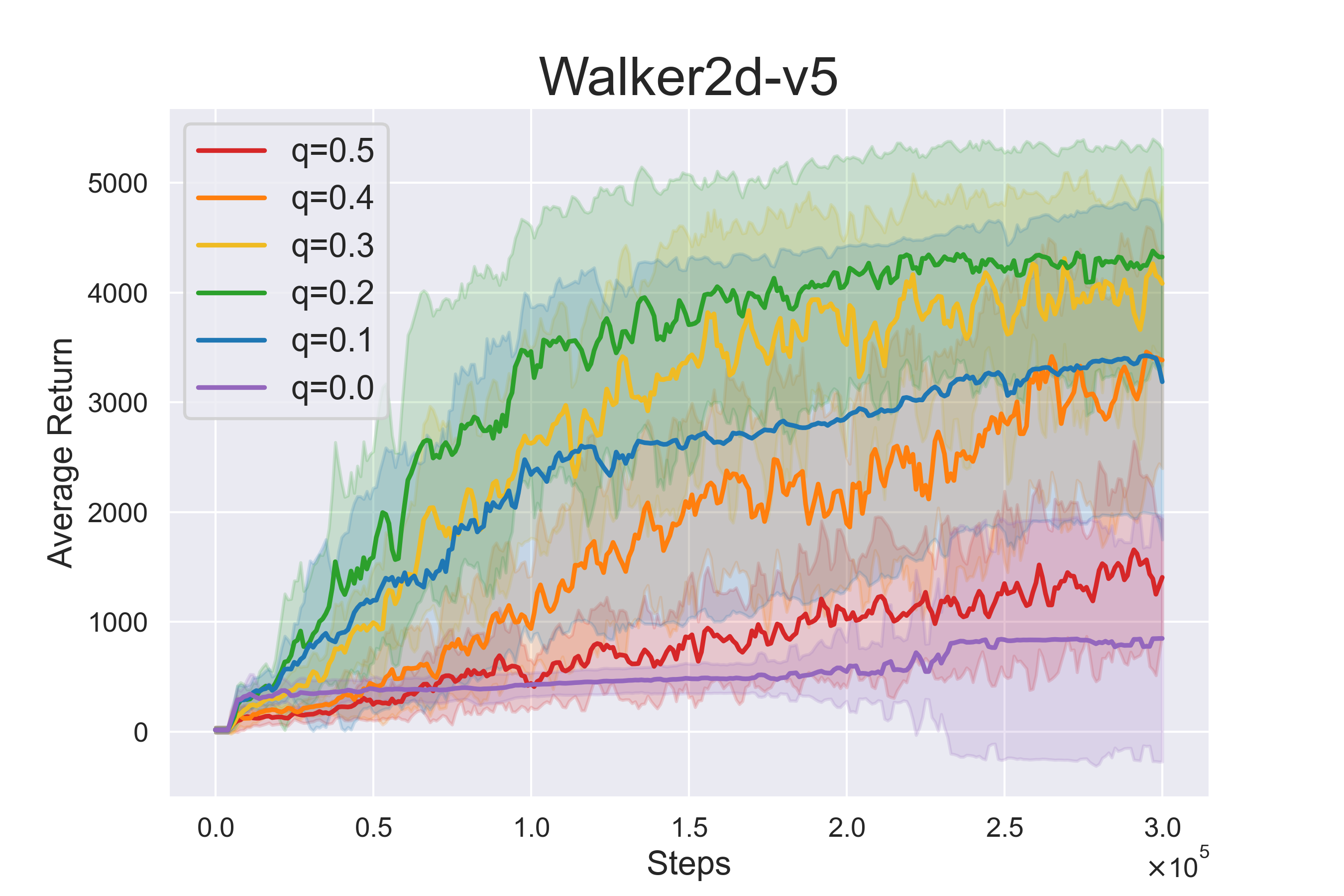}
\centering{}\caption{\label{fig:Average-return-quantile}Average return against environmental
steps for MAMC with different quantile parameters $q\in\{0.0,0.1,0.2,0.3,0.4,0.5\}$
on HalfCheetah-v5 and Walker2d-v5}
\end{figure}

\subsection{Quantile Value Comparison\label{subsec:Quantile}}

Figure \ref{fig:Average-return-quantile} draws the average return
against environment steps for MAMC with different quantile parameters
$q\in\{0.0,0.1,0.2,0.3,0.4,0.5\}$ on HalfCheetah-v5 and Walker2d-v5.
On HalfCheetah-v5, the MAMCs with $q=0.1$, 0.3, and 0.5 are better,
while on Walker2d-v5 the MAMCs with $q=0.2$, and 0.3 performs nicer.
The quantile parameter highly hinges on the environmental preference
of optimism or pessimism. From the experimental results, this study
would suggest setting $q\in[0.2,0.3]$ for better robustness.

\subsection{The number of Actors and Critics\label{subsec:The-number-of-AC}}

Figure \ref{fig:Average-return-actors-critics} depicts the average
return against environmental steps for MAMC with different number
of actors $N_{A}\in\{2,5,10,15\}$ and critics $N_{C}\in\{2,5,10,15\}$
on HalfCheetah-v5, Ant-v5, and Humanoid-v5 over five trials. For setting
the number of actors, the MAMC with $N_{A}=10$ performs best, and
the performance deteriorates as the number of actors grows to 15 or
shrinks to 5 and 2. By varying the number of critics, the MAMC with
$N_{C}=10$ provides the most robust results on the three environments,
in comparison to the other three values. Hence, this study suggests
taking 10 actors and critics for the MAMC.

\begin{figure}[!t]
\begin{centering}
\includegraphics[width=0.96\columnwidth]{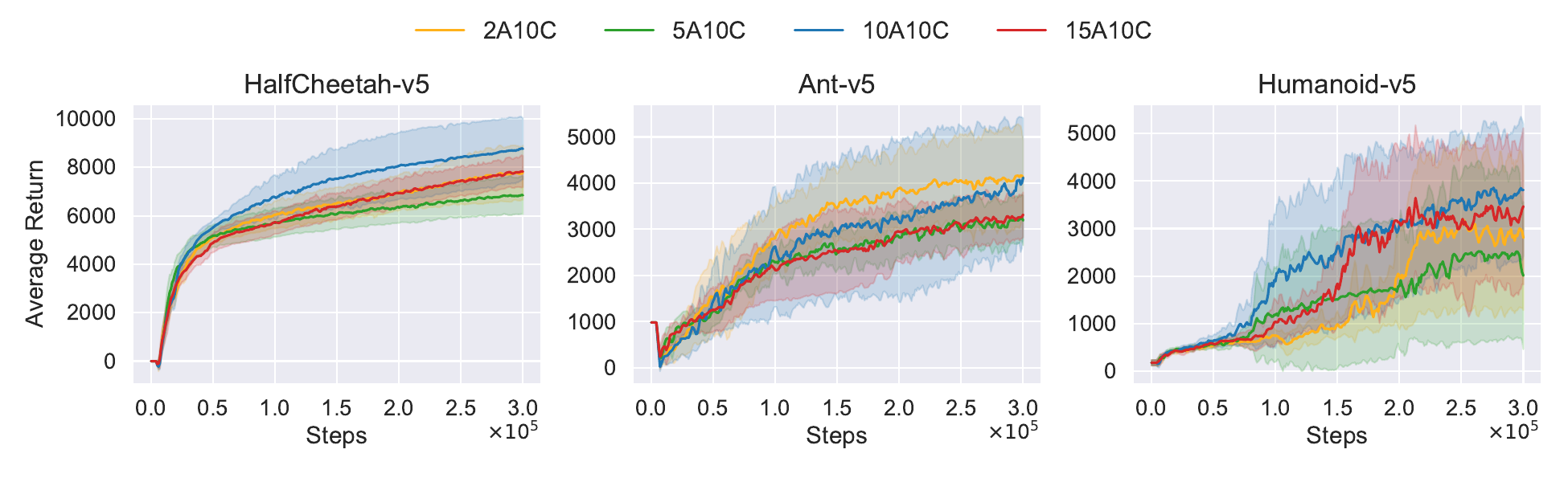}
\par\end{centering}
\centering{}\includegraphics[width=0.96\columnwidth]{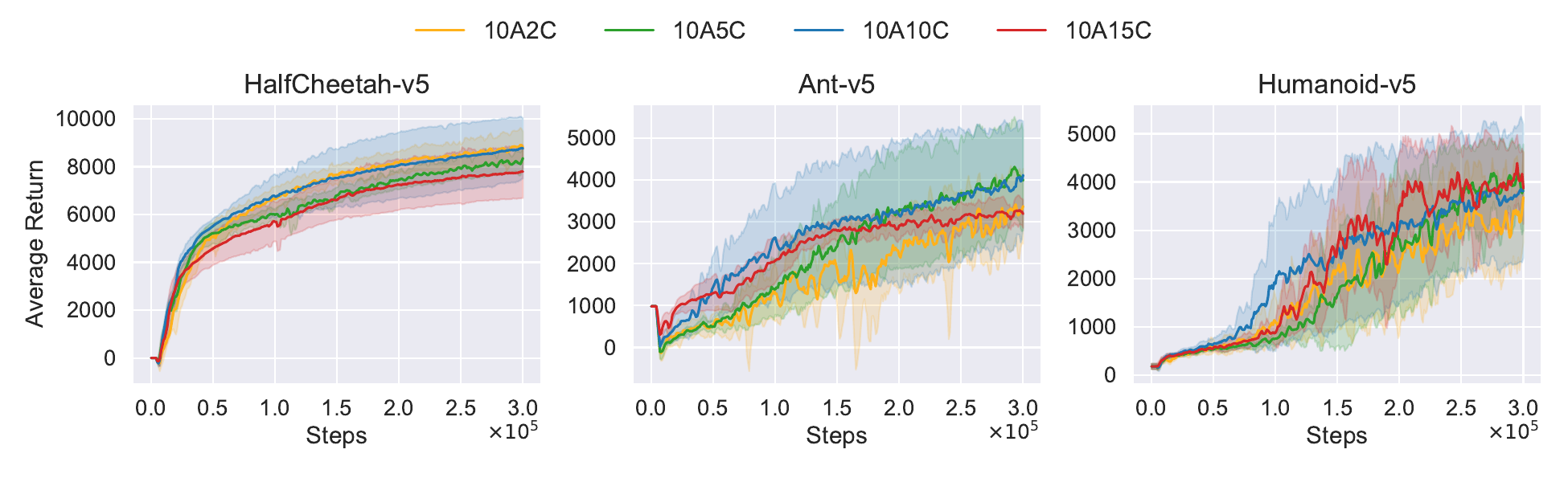}\caption{\label{fig:Average-return-actors-critics}Average return against environmental
steps for MAMC with different number of actors $N_{A}\in\{2,5,10,15\}$
and critics $N_{C}\in\{2,5,10,15\}$ on HalfCheetah-v5, Ant-v5, and
Humanoid-v5 over five trials}
\end{figure}
\end{itemize}

\end{document}